\newtheorem{theorem}{Theorem}
\theoremstyle{definition}
\newtheorem{definition}[theorem]{Definition}
\newtheorem{claim}[theorem]{Claim}
\theoremstyle{remark}
\title{On Support Samples of Next Word Prediction}
\author{
 \textbf{Yuqian Li\textsuperscript{1}\thanks{~~Equal contribution.}},
 \textbf{Yupei Du\textsuperscript{2}\footnotemark[1]},
 \textbf{Yufang Liu\textsuperscript{1}},\\
 \textbf{Feifei Feng\textsuperscript{3}},
 \textbf{Mou Xiao Feng\textsuperscript{3}},
 \textbf{Yuanbin Wu\textsuperscript{1}},
\\
 \textsuperscript{1}School of Computer Science and Technology, East China Normal University,\\
 \textsuperscript{2}Utrecht University, the Netherlands,\\
 \textsuperscript{3}Midea Group,
\\
 \small{
\href{mailto:email@domain}{liyuqian44@gmail.com, y.du@uu.nl, ybwu@cs.ecnu.edu.cn}
 }
}
\begin{document}
\maketitle
\begin{abstract}
%
%

Language models excel in various tasks by making complex decisions, yet understanding the rationale behind these decisions remains a challenge. 
This paper investigates \emph{data-centric interpretability} in language models, 
focusing on the next-word prediction task. 
Using representer theorem, 
we identify two types of \emph{support samples}—those that either promote or deter specific predictions. 
Our findings reveal that being a support sample is an intrinsic property, 
predictable even before training begins. 
Additionally, while non-support samples are less influential in direct predictions, 
they play a critical role in preventing overfitting 
and shaping generalization and representation learning. 
Notably, the importance of non-support samples increases in deeper layers, 
suggesting their significant role in intermediate representation formation. 
These insights shed light on the interplay between data and model decisions, 
offering a new dimension to understanding language model behavior and interpretability.
\footnote{
Our source code is publicly available at
\url{https://github.com/liyuqian44/On-Support-Samples-of-Next-Word-Prediction}.
}

\end{abstract}

\section{Introduction}

Language models make decisions. 
From 
selecting an answer from QA benchmarks
to 
generating reasoning steps for grade school math problems in GSM8k,
language models are applauded and criticized for their decision-making ability.
Like all other AI systems, understanding 
how decisions are made is an important research topic.

Support of decisions can be explored in different parts of language models. 
One common attribution is the patterns of neuron activations (\emph{mechanistic interpretability}): 
some neurons are more sensitive to a specific decision
and some are not. 
The circuit formed by active neurons mechanistically
explains the model's decision \cite{elhage2021mathematical}.
Mechanistic interpretability targets on model parameters
(and hidden states), 
however, it is not the only possible place to 
explore the rationale behind decisions.
From the perspective of compression, 
parameters are the compressed version of training data. 
So, a natural question is whether we can trace model decisions
back to their remote origin, the data.

We follow this point of view and study \emph{data-centric interpretability} of language models.
Concretely, we will focus on decisions about choosing 
the next word (the essential decision of language models), 
and try to answer the question 
``Which samples contribute the most when the model 
decides to predict a token $v$?''. 
For general machine learning problems, 
there are two typical methods: 
one is based on counterfactual argument 
(a sample is important if the decision will change 
if it is removed from the training set 
\cite{pmlr-v70-koh17a}), 
another is based on representation theorems 
(a sample is important if it occupies a large part of
prediction parameters \cite{DBLP:conf/nips/YehKYR18}). 
The first one requires computing Hessians of model 
parameters which could be expensive for analyzing large language models \cite{DBLP:journals/corr/abs-2308-03296}. 
We thus adopt the more efficient representation-based methods. Specifically, 
\begin{itemize}
    \item We illustrate the contribution of different samples
    to model parameters based on a simple representation theorem for the next word predictor. 
    The results remind us of two types of \emph{support samples} (those contribute a lot), 
    one attracts the predictor to predict $v$, 
    while another pushes the predictor away from predicting $v$.
    \item We show that, given a dataset and a model configuration, being a support sample is an intrinsic property of that sample, in the sense that 
    1)  non-support samples have limited influence on learning support samples, 
    and 2) it is possible to predict  (with 80\% accuracy)
    whether a sample is a support one at the very early phase
    of language model training, even without any training.
    \item We find that though non-support samples are less
    noticeable in the next-word predictor, 
    it may play an important role in controlling 
    generalization and representations learning.
    Specifically, 
    we observe that 1) without any non-support 
    samples, learning the predictor will
    suffer from unnecessary overfitting,
    and a small number of non-support samples can greatly alleviate the problem. 
    2) the proportion of non-support samples increases 
    as the layer goes deeper, which means 
    some of them are support samples of 
    (thus contributing a lot to) intermediate representations.
\end{itemize}

\section{Next Word Predictor Representation}

We aim to find important samples to support a language model's
next-word decisions. To measure importance, 
we first connect parameters 
(in particular, prediction heads at the last layer) 
and training samples with a representation theorem, 
then support samples are defined to be those samples 
contributing significantly to parameters. 

Denote $(\mathbf{x},\mathbf{y})$ to be a training sample, 
where $\mathbf{x}=x_1,x_2,...,x_{t-1}$ is a sentence prefix, 
$\mathbf{y}=x_t \in V$ is the target token to predict given the prefix, and $V$ is the vocabulary.
$D=\{(\mathbf{x}_i, \mathbf{y}_i)\}_{i=1}^N$ 
is a set of samples.
We consider a decomposition of language models
with a representation function 
$\phi(\mathbf{x}) \in \mathrm{R}^d$ 
and a token prediction function 
$f(\mathbf{x}) = \arg{\max}_{v} p(v|\mathbf{x})$,
where $p(v|\mathbf{x}) = \frac{1}{Z}\exp{\theta_v^T\phi(\mathbf{x})}$. 
Let $\theta_\phi, \theta_v$ be the parameter of 
$\phi(\cdot)$ and $f(\cdot)$, 
$\theta_V = [\theta_{v_1}; \theta_{v_2};\ldots; \theta_{v_{|V|}}]$
pack all predictor parameters (also called language model heads),
and 
$\theta=[\theta_V;\theta_\phi]$ pack all parameters.

\begin{theorem}[
\citealt{DBLP:journals/jmlr/CrammerS01,DBLP:conf/nips/YehKYR18}
]
\label{thm:rep}
Assume $\theta$ is a stationary point of the loss function 
$L(\theta,D)=-\frac{1}{N}\sum_{i=1}^N\log p(\mathbf{y_i}|\mathbf{x_i})+\lambda\|\theta\|_2^2$,
then $\theta_v$ equals
\begin{equation}
\theta_v = 
   \frac{1}{2N\lambda}\sum_{i=1}^N 
   (\mathbbm{1}(\mathbf{y}_i=v)-p(v|\mathbf{x}_i))\phi(\mathbf{x}_i)
   \label{eq:rep}
\end{equation}
where $\mathbbm{1}(\cdot)$ equals $1$ when the argument is true, $0$ otherwise.
\end{theorem}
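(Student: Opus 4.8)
The plan is to obtain (\ref{eq:rep}) directly from the first-order stationarity condition $\nabla_{\theta_v}L(\theta,D)=0$. First I would split the loss into the data term and the regularizer and differentiate each with respect to the block $\theta_v$ only, treating $\theta_\phi$—and hence every feature vector $\phi(\mathbf{x}_i)$—as fixed, since $\phi$ does not depend on $\theta_v$. For the regularizer, writing $\|\theta\|_2^2=\sum_{v'\in V}\|\theta_{v'}\|_2^2+\|\theta_\phi\|_2^2$, only the $v'=v$ summand survives differentiation, contributing $2\lambda\theta_v$.

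For the data term I would use the log-softmax identity $\log p(\mathbf{y}_i|\mathbf{x}_i)=\theta_{\mathbf{y}_i}^T\phi(\mathbf{x}_i)-\log Z_i$ with $Z_i=\sum_{v'\in V}\exp(\theta_{v'}^T\phi(\mathbf{x}_i))$. Differentiating with respect to $\theta_v$, the first term contributes $\mathbbm{1}(\mathbf{y}_i=v)\phi(\mathbf{x}_i)$, and the standard computation $\nabla_{\theta_v}\log Z_i=\frac{\exp(\theta_v^T\phi(\mathbf{x}_i))}{Z_i}\phi(\mathbf{x}_i)=p(v|\mathbf{x}_i)\phi(\mathbf{x}_i)$ gives the second, so each sample contributes $(\mathbbm{1}(\mathbf{y}_i=v)-p(v|\mathbf{x}_i))\phi(\mathbf{x}_i)$ to the gradient of the averaged negative data term. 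Collecting everything, stationarity reads $-\frac{1}{N}\sum_{i=1}^N(\mathbbm{1}(\mathbf{y}_i=v)-p(v|\mathbf{x}_i))\phi(\mathbf{x}_i)+2\lambda\theta_v=0$, and solving for $\theta_v$ yields (\ref{eq:rep}).

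There is no genuine obstacle here; the statement is a routine consequence of a gradient computation. The only points requiring care are bookkeeping ones: that differentiating the partition function $\log Z_i$ with respect to the single block $\theta_v$ extracts exactly the $v$-th softmax probability $p(v|\mathbf{x}_i)$, that $\phi(\mathbf{x}_i)$ is to be held constant when the derivative is taken with respect to the head parameters, and that $\lambda>0$ is implicitly assumed so that we may divide through (otherwise $\theta_v$ is not pinned down). I would also remark that only the first-order condition is used, so the identity holds verbatim at any stationary point, not merely at a global minimizer of $L(\theta,D)$.
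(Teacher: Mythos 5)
Your proposal is correct and follows essentially the same route as the paper's proof in Appendix~\ref{section:proof}: differentiate $\log p(\mathbf{y}_i|\mathbf{x}_i)=\theta_{\mathbf{y}_i}^T\phi(\mathbf{x}_i)-\log Z_i$ with respect to the block $\theta_v$ to obtain $(\mathbbm{1}(\mathbf{y}_i=v)-p(v|\mathbf{x}_i))\phi(\mathbf{x}_i)$, then impose stationarity and solve for $\theta_v$. Your added remarks (holding $\phi(\mathbf{x}_i)$ fixed, the block decomposition of the regularizer, and the implicit assumption $\lambda>0$) are sensible clarifications but do not change the argument.
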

The theorem gives a decomposition of the predictor $\theta_v$
using training samples $\phi(\mathbf{x})$.
The coefficients of samples 
$\alpha_i\triangleq \mathbbm{1}(\mathbf{y}_i=v)-p(v|\mathbf{x}_i)$
describe their importance in $\theta_v$.
A large $\alpha_i$ implies a bigger influence, 
and it also implies the sample is difficult to learn:
a large $\alpha_i$ means
$p(v|\mathbf{x}_i)$ is small when $\mathbf{y}_i = v$,
or $p(v|\mathbf{x}_i)$ is large when $\mathbf{y}_i \neq v$,
in both cases the probability of the ground truth token
is small.
\begin{definition}
   The support samples of predicting token $v$ are defined to be
   $S_v=\{(\mathbf{x}_i,\mathbf{y}_i)|\tau \leq |\alpha_i| \}$.
   The support samples of the full language model are
   $S=\cup_v S_v$. 
   The non-support samples are $\bar{S}$.
   \footnote{We set hyperparameter $\tau=0.9$ in this paper as justified in Appendix \ref{section:tau}.}
\end{definition}

We have seen that support samples are roughly 
hard-to-predict samples, 
here we can also inspect the opposite side 
by considering the relationship between non-support samples
and easy-to-predict samples.
Specifically, we consider 
$M=\{(\mathbf{x}, \mathbf{y})|
f(\mathbf{x})=\mathbf{y}, p(v|\mathbf{x}) \geq \gamma\}$,
which are samples correctly predicted after training,
and defined to be ``memorized'' by language models 
\cite{DBLP:conf/nips/TirumalaMZA22}.
The following fact shows that the memorized samples 
are a kind of non-support samples.
\footnote{The original definition of memorized samples in
\cite{DBLP:conf/nips/TirumalaMZA22} sets $\gamma=0$. 
We can see a subtle rationale for introducing 
the confidence threshold $\gamma$:
the claim here says that even a sample's correct label 
can be predicted, if the prediction is not confident enough, 
it could still be a support one.}
\begin{claim}
\label{thm:mem}
If $\gamma, \tau \geq 0.5$, the non-support samples $\bar{S}$ contains $M$.
\end{claim}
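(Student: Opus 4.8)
The plan is to prove $M \subseteq \bar{S}$ directly, by fixing an arbitrary memorized sample $(\mathbf{x},\mathbf{y}) \in M$ and showing that for \emph{every} token $v \in V$ its coefficient $\alpha = \mathbbm{1}(\mathbf{y}=v)-p(v|\mathbf{x})$ satisfies $|\alpha| < \tau$; this is exactly the statement that $(\mathbf{x},\mathbf{y}) \notin S_v$ for all $v$, hence $(\mathbf{x},\mathbf{y}) \in \bar{S}$. The only inputs I will use are the two defining properties of $M$ — correctness, $f(\mathbf{x})=\mathbf{y}$, and confidence, $p(\mathbf{y}|\mathbf{x}) \geq \gamma$ — together with the hypothesis $\gamma,\tau \geq \tfrac12$.

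First I would establish the elementary bound $\max_{v}|\alpha_v| = 1-p(\mathbf{y}|\mathbf{x})$. When $v=\mathbf{y}$, the coefficient is $1-p(\mathbf{y}|\mathbf{x})\ge 0$. When $v\neq\mathbf{y}$, it equals $-p(v|\mathbf{x})$, whose magnitude $p(v|\mathbf{x})$ is at most $\sum_{u\neq\mathbf{y}}p(u|\mathbf{x}) = 1-p(\mathbf{y}|\mathbf{x})$ since $p(\cdot|\mathbf{x})$ is a distribution over $V$. (Notice the argmax condition $f(\mathbf{x})=\mathbf{y}$ is not needed for this bound; it is only what guarantees $M$ is nonvacuous once $\gamma>\tfrac12$.) Then I would chain inequalities: because $(\mathbf{x},\mathbf{y})\in M$ gives $p(\mathbf{y}|\mathbf{x}) \geq \gamma \geq \tfrac12$, we get $|\alpha_v| \leq 1-p(\mathbf{y}|\mathbf{x}) \leq 1-\gamma \leq \tfrac12 \leq \tau$ for every $v$, so $(\mathbf{x},\mathbf{y})$ lies in no $S_v$ and therefore belongs to $\bar{S}$, completing the containment.

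The one place requiring care — and the only genuine subtlety here — is the boundary of these inequalities: at $\gamma=\tau=\tfrac12$ with $p(\mathbf{y}|\mathbf{x})=\tfrac12$ one has $|\alpha_{\mathbf{y}}|=\tfrac12=\tau$, so $S_{\mathbf{y}}$ (defined via $\tau \leq |\alpha_i|$) would formally capture the sample. I would handle this with a short remark: the argument in fact only uses $\gamma+\tau\ge 1$, and the inclusion is strict as soon as either $\gamma$ or $\tau$ exceeds $\tfrac12$ — in particular for the value $\tau=0.9$ adopted in the paper — or, alternatively, $p(\mathbf{y}|\mathbf{x})=\tfrac12$ would force a tie in $\arg\max_v p(v|\mathbf{x})$, which is excluded whenever $f$ is single-valued. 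Everything else is a one-step substitution, so I do not anticipate any real difficulty beyond stating this edge case cleanly.
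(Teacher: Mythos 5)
Your proof is correct and follows essentially the same route as the paper's: for a memorized sample you bound $|\alpha_v|$ by $1-p(\mathbf{y}|\mathbf{x})\le 1-\gamma\le \tfrac12\le\tau$ for every $v$ (splitting, implicitly or explicitly, into the cases $v=\mathbf{y}$ and $v\neq\mathbf{y}$), so the sample lies in no $S_v$. If anything you are more careful than the paper, which silently writes the strict inequality $p(\mathbf{y}|\mathbf{x})>0.5$ even though the definition of $M$ only guarantees $p(\mathbf{y}|\mathbf{x})\ge\gamma\ge 0.5$; your explicit treatment of the boundary case $\gamma=\tau=\tfrac12$, $p(\mathbf{y}|\mathbf{x})=\tfrac12$ closes a gap the paper leaves open.
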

\begin{proof}
When $\gamma \geq 0.5$,  $p(\mathbf{y}|\mathbf{x}) \geq \tau$
implies $f(\mathbf{x})=\mathbf{y}$,
and $M=\{(\mathbf{x}, \mathbf{y})| p(v|\mathbf{x}) \geq \gamma\}$.

At the same time, for any $(\mathbf{x}, \mathbf{y}) \in M$,
if there exists a $v$ such that 
$(\mathbf{x}, \mathbf{y})\in S_v$, 
\begin{itemize}
    \item if $v=\mathbf{y}$, $\alpha = 1-
    p(\mathbf{y}|\mathbf{x})< 0.5 \leq \tau$,
    \item if $v\neq\mathbf{y}$, since 
    $p(\mathbf{y}|\mathbf{x}) > 0.5$,
    $\alpha = p(v|\mathbf{x}) < 0.5 \leq \tau $.
\end{itemize}
There are contradictions $\alpha < \tau$ in both cases, 
thus for all $v$, $(\mathbf{x}, \mathbf{y}) \notin S_v$.
\end{proof}


\section{Support Samples at First Glance}\label{sec:first_glance}

We now profile support samples.
Our exemplar language model is trained from scratch
with GPT-2 architecture 
($117$M parameters, with $l=12$ Transformer layers and 
hidden size $d=768$)
and wikitext-2 dataset ($2.37$M training samples).
The vocabulary size $|V|=50257$.
We will investigate models with more parameters 
($345$M, $774$M, $1.5$B) and larger training set 
(wikitext-103 with $117$M samples)
in specific experiments, and report results on larger models in Appendix \ref{section:scaleup}.


\begin{figure}
  \includegraphics[width=\columnwidth]{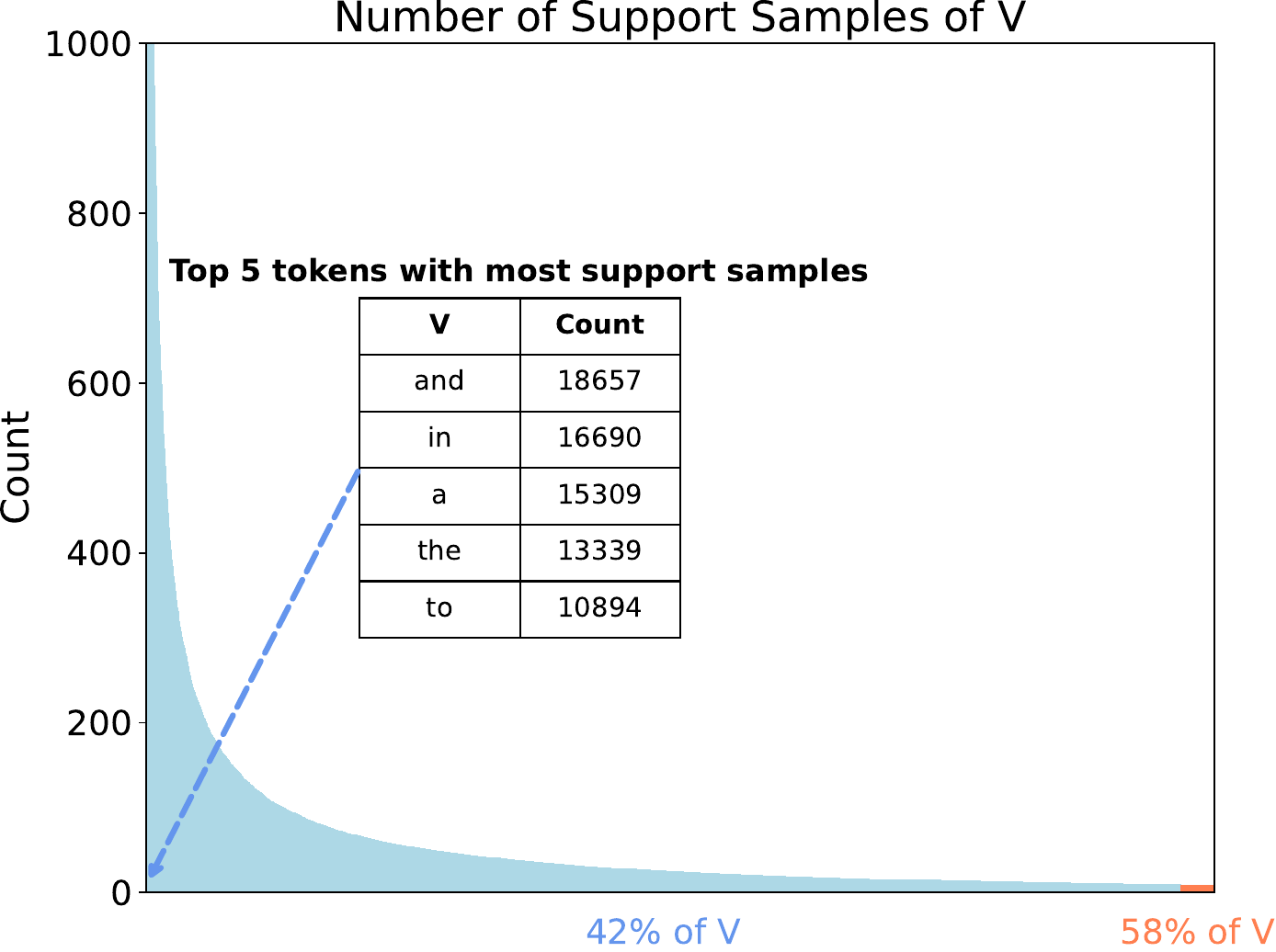}
  \caption{Number distribution of support samples for each token $v$.
  Blue bars are the top $42\%$ tokens with the most support samples. Orange bars are the remaining $58\%$ tokens which have less than $10$ support samples. 
  The table shows the number of the top 5 tokens. }
  \label{fig:v_support_number}
\end{figure}

\paragraph{Number of support samples.}
Our first observation is that, 
with sufficient training (loss converges) 
and standard model selection 
(best validation set performances),
among $2.37$M samples, 
there are $1.29$M support samples ($54\%$).
The proportion is remarkably high:
more than half of the samples are
important for the next-word predictor.
The larger amount of support samples
implies fewer patterns are discovered during training:
the predictor fails to make the correct decision
by keeping a few representative samples.
\footnote{Therefore, we may also have a 
discuss on the definition of memorized samples:
in fact, akin to the memory of a computer, we can imagine the support samples (rather than non-support samples or 
previously defined memorized samples (Claim \ref{thm:mem})) 
are stored (or memorized) in
parameters, and they will be looked up later for token prediction.}

By depicting the number of support samples 
for individual $v\in V$ (Figure \ref{fig:v_support_number}),
we further find the distribution is highly screwed:
very few $v$ contribute a large number of support samples.
It implies that though the overall support samples are
many, prediction patterns of most tokens are clear
(i.e., $58\%$ tokens have support samples less than $10$).
We additionally provide a concrete example of the support and non-support samples for a specific $v$ in Appendix \ref{section:example}.


\begin{figure}[t]
  \includegraphics[width=\columnwidth]{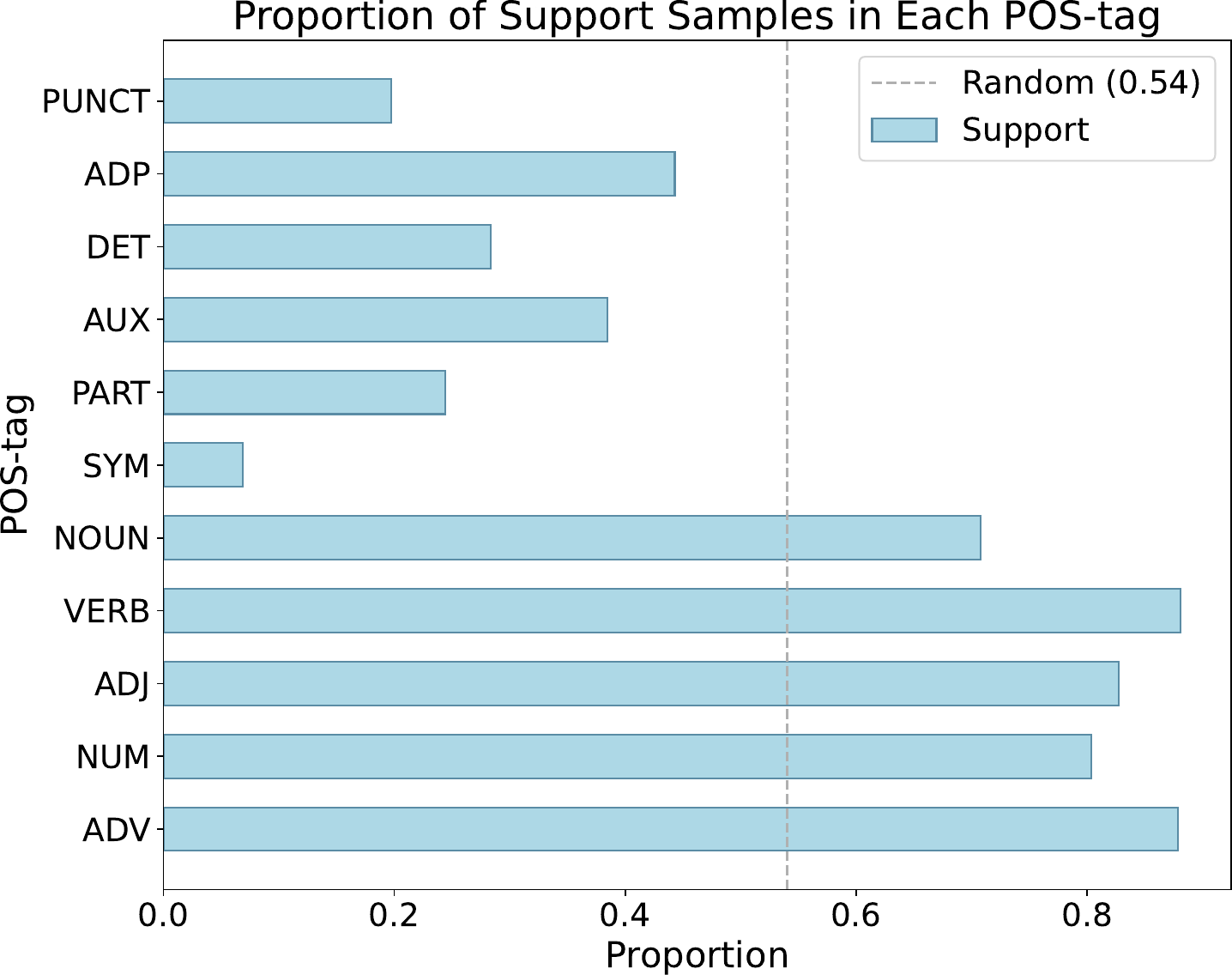}
  \caption{Proportion of support samples in different POS tags. ``Random'' means that a random sampling 
  on the full dataset will pick $54\%$ (which is 
  the proportion of support samples) support samples. 
  The figure shows that the proportions of support samples in 
  two POS-tag groups
  significantly differ from the random proportion.}
  \label{fig:postag}
\end{figure}

\paragraph{POS tags of support samples.} 
To inspect support samples more closely,
we roughly annotate POS tag of the next token 
$\mathbf{y}$ given its prefix $\mathbf{x}$.
\footnote{https://spacy.io/},
and see whether samples with different target POS tags tokens 
will have different probabilities to be support samples.
Figure \ref{fig:postag} shows that  
$86\%$ of verbs are support samples
while the proportion is only $20\%$ for punctuations.
In general, support samples 
are more prevalent in semantically rich POS tags 
(e.g., ADV, NUM, ADJ, VERB, NOUN), 
while less common in semantically light tags 
(e.g., PART, AUX, DET, ADP, PUNCT).
It indicates that relatively, the predictor keeps more
information about verbs than punctuations in 
its parameters.

\paragraph{Two types of support samples}
By revisiting the representation of $\theta_v$,
we can group support samples into two types. Specifically,
Equation \ref{eq:rep} can be rewrite as
(dropping $(N\lambda)^{-1}$),
\begin{equation}
    \sum_{\mathbf{y}_i=v}
   (1-p(v|\mathbf{x}_i))\phi(\mathbf{x}_i) + 
    \sum_{\mathbf{y}_i \neq v}
   (-p(v|\mathbf{x}_i))\phi(\mathbf{x}_i)
\end{equation}
For each target token $v$, we define support samples 
from the first summation to be Type-1, and
the second summation Type-2.
Namely, Type-1 are support samples with the same target token as $v$,
but their prediction confidence is low.
Type-2 are samples with different target tokens, 
but they are confident in predicting the wrong target $v$.

To understand the roles of Type-1 and Type-2 samples,
we conduct the following counterfactual argument:
what if we remove (subtract) one support sample type of $v$ from 
all prediction parameters $\theta_{V}$.
Let $(\mathbf{x}',\mathbf{y}')$ be a testing sample.
Empirically, we assume $\phi(\mathbf{x}_i)^T\phi(\mathbf{x}') \geq 0$
for all $i$.

If we subtract Type-1 samples of $v$, 
the logit $\theta_v^T\phi(\mathbf{x}')$ will be smaller
since a positive term 
$\sum_{\mathbf{y}=v}(1-p(\mathbf{y}_i|\mathbf{x}_i))
\phi(\mathbf{x}_i)^T\phi(\mathbf{x}')$ is dropped.
With a softmax activation, the probability of predicting $v$ 
could be greatly decreased, and the model resists predicting $v$.
Similarly, when subtracting Type-2 samples, 
a negative term is dropped, and the model tends to predict $v$.
\footnote{We also simply (and empirically) ignore the operations' potential complex influences on other predictors (Figure \ref{fig:v_support_number}).}
Therefore, we would think that the existence of Type-1 support samples 
pull the predictor towards predicting $v$,
while Type-2 samples push the predictor away from predicting $v$.

Table \ref{tab:type} lists a concrete example for which we 
conduct the above subtracting experiments.
The results show that, for these two 
when removing Type-1, predictions on $v$ fail,
and predictions on other samples can be kept
(even improved, compared with random subtracting:
$3.35$ vs. $3.75$, and $3.28$ vs. $3.80$).
When removing Type-2, 
predictions on $v$ are perfect,
while other samples are negatively affected.
It shows that holding Type-2 support samples 
is crucial for the predictor, which is 
new to the perplexity-based data selection principle
(only Type-1 samples are important).

To further understand Type-2, we draw a network to illustrate
connections of next-token targets.
In Figure \ref{fig:type2_graph}, 
we link tokens according to Type-2 supporting relations: 
each directed edge from $v$ to $u$ indicates that a sample 
with $v$ as the target token is the Type-2 support sample of $u$. 
We also list hubs with most in-degrees and out-degrees.


\begin{table}
  \centering
\small
  \begin{tabular}{lcc|cc}
    \toprule
   & \multicolumn{2}{c|}{$v$=``hens''} & \multicolumn{2}{c}{$v$=``ction''}  \\
    \textbf{Removed Set} & \textbf{full-set} & $\mathbf{y}=v$ & \textbf{full-set} & $\mathbf{y}=v$   \\
    \midrule
     -$\emptyset$   & 3.28 & 0.24  & 3.28 & 0.56          \\
     -$S_v$    & 4.45 & 2.37  & 3.52 & 1.50         \\
     -Type-1 of $S_v$    & 3.35 &  16.73  & 3.28 & 33.09          \\
     -Type-2 of $S_v$ & 4.75 &0.00  & 3.90 & 0.00          \\
     -random   & 3.75 & 0.27 & 3.80 & 0.51          \\
    \bottomrule
  \end{tabular}
  \caption{Removing support samples of $v$ = ``hens'' and ``ction''.
  The number of support samples $|S_v|=10$, and half of them 
  are Type-1.
  When removing the samples, we subtract them from all $\theta_v$.
  We consider four sets to remove, $S_v$, Type-1/Type-2 of $S_v$,
  and a random subset with the same number.
  We report the loss of all training samples (full-set)
  and loss of a subset which contains samples with
  $v$ as the target token ($\mathbf{y}=v$).
  }
  \label{tab:type}
\end{table}


\begin{figure}[t]
  \centering
  \begin{minipage}[t]{\linewidth}
    \centering
    \includegraphics[width=\linewidth]{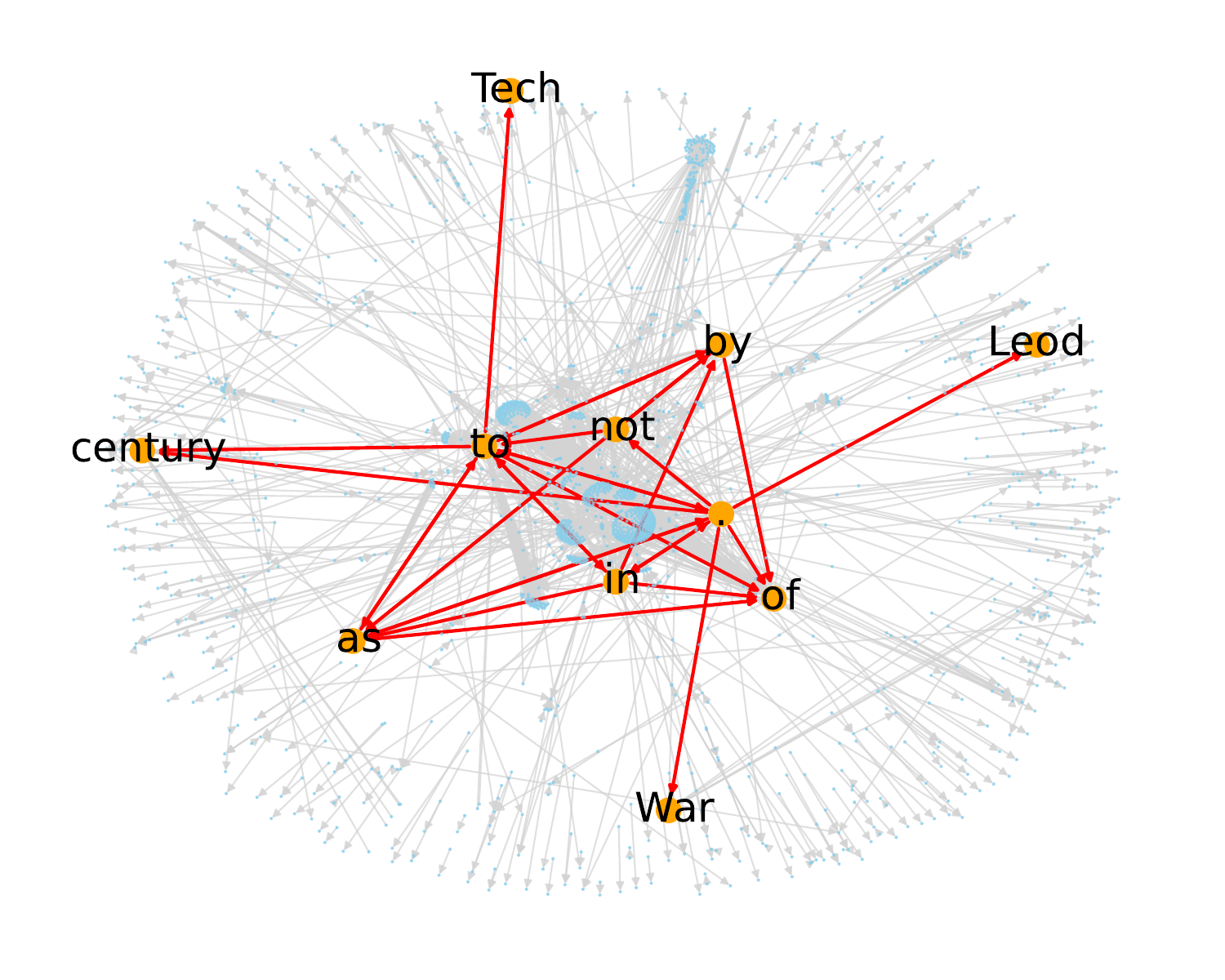} 
  \end{minipage}
  \begin{minipage}[t]{\linewidth}
  \small
    \centering
    \begin{tabular}{cccccc}
      \toprule
      $v$ & \textbf{,} & \textbf{@} & \textbf{and} & \textbf{.} & \textbf{in} \\
      out-degree & 30 & 28 & 27 & 22 & 17 \\
      \midrule
      $v$ & \textbf{=} & \textbf{of} & \textbf{s} & \textbf{,} & \textbf{@} \\
      in-degree & 242 & 153 & 144 & 134 & 84 \\
      \bottomrule
    \end{tabular}
  \end{minipage}

  \caption{
  An illustration of token relations according to Type-2. 
  The number of nodes is $1690$ (3\% of $V$, not all tokens have Type-2 support samples). 
  The number of edges is $1993$.
  We lists tokens with top k(=5) out-degree and in-degree in the network.}
    \label{fig:type2_graph}
\end{figure}
\section{Importance of Support Samples}\label{sec:importance_of_support}

Support samples contribute the most to $\theta_v$ (LM heads). This leads us to pose a question: \emph{If the support samples are defined to be the most important samples, can we only use them (ignore those non-support samples) during training?}

We examine alternative ways of removing non-support samples: removing with sampling 1) \textbf{hard}: removing all non-support samples; 2) \textbf{random}; uniformly removing the same number of samples as the non-support; 3) \textbf{soft}: using the coefficient $\alpha_i$ as the probability for sampling, meaning that some non-support samples will be retained. We experiment with two training configurations: \textbf{only training LM heads} (other parameters inherited from the original trained LM) and \textbf{full model training}. The result is shown in Table \ref{tab:removing non-support}.

For training LM heads, the answer is: Yes, we can remove non-support samples (around 40\%-50\% of the samples) while keeping the same performances, with weighted sampling or even random sampling. However, for training full models, the answer is No. 

An interesting phenomenon is that soft sampling outperforms random sampling for training LM heads (fixed representation space), but the opposite occurs when training full model. The former tells non-support samples are not necessary for learning heads (in the sense of soft removing). The latter implies non-support samples are important for learning representations (parts of the model beyond the heads). Otherwise, the model trained only with support samples will perform like only head tuning with support samples. A deeper exploration of this point is presented in Section 6. 

Curiously, for tuning heads, hard sampling performs worse than soft sampling.(more or even exclusively support samples does not lead to better performance, and meanwhile random sampling with fewer support samples performs worse.) Observing the changes in support samples, we find that although hard sampling slightly reduces original support samples (1.29$\to$1.18), it turns more unseen non-support samples into support(1.29$\to$1.90). This makes the predictor's hyperplane unclearer (more support samples), and leads to a performance drop. 
It means without any non-support samples, learning the predictor will suffer from unnecessary overfitting to support. A small number of non-support samples can greatly alleviate the problem (soft better than hard), while support samples are still key for tuning heads (soft better than random). Regarding full model training, the lack of non-support samples poses a more significant issue, thus random sampling surpasses soft sampling, because non-support samples are more important for parts of the model beyond the heads as mentioned in the previous paragraph.


\begin{table*}
  \centering
  \begin{subtable}[t]{\linewidth}
  \small
    \centering
    \begin{tabular}{lccc}
      \toprule
      \multirow{2}{*}{\textbf{Method}} & \multirow{2}{*}{\textbf{ Test Loss}} & \multicolumn{2}{c}{\textbf{\# Support Samples(M)}} \\
      \cmidrule(lr){3-4}
      & & new training set (after removal) & original training set \\
      \midrule
      \textbf{-}    & 5.08    & 1.29 &    1.29   \\
      hard    & 5.64      & 1.29$\to$1.18&  1.29$\to$1.90  \\ 
      soft    & 5.13   &  0.87$\to$0.87&   1.29$\to$1.33   \\
      random    & 5.18   & 0.70$\to$0.74 &   1.29$\to$1.38 \\\bottomrule
    \end{tabular}
    \caption{only training LM heads}
    \label{tab:first}
  \end{subtable}
  \hspace{0.05\linewidth}
  \begin{subtable}[t]{\linewidth}
    \centering
    \small
    \begin{tabular}{lccc}
      \toprule
      \multirow{2}{*}{\textbf{Method}} & \multirow{2}{*}{\textbf{Test Loss}} & \multicolumn{2}{c}{\textbf{\# Support Samples(M)}} \\
      \cmidrule(lr){3-4}
      & & new training set (after removal) & original training set  \\
      \midrule
      \textbf{-}      & 5.08   & 1.29 &   1.29    \\
      hard      &6.57     &  1.29$\to$1.28 &    1.29$\to$2.36 \\
      soft      &5.69   &  0.87$\to$1.01&   1.29$\to$1.76    \\
      random      &5.47  & 0.70$\to$0.73&  1.29$\to$1.45  \\\bottomrule
    \end{tabular}
    \caption{full model training}
    \label{tab:second}
  \end{subtable}
  \caption{Performance of the predictor and Change of support samples
  when removing non-support samples from original training set with various sampling methods. \textbf{-}: no removing (original training set). \textbf{hard}: removing all non-support samples; \textbf{random}: randomly removing the same number of samples as the non-support; \textbf{soft}: using the coefficient $\alpha_i$ as the probability for weighted sampling, meaning that some non-support samples will be retained. We count support samples on two types of dataset: one is new training set (after removal) as the initial number of support samples varies for each method, another is original training set.}
  \label{tab:removing non-support}
\end{table*}

See Table \ref{tab:removing non-support} again, a pattern seems to emerge: the fewer the support samples, the lower the loss. We already see that the learned next-word predictor has many support samples. The next question is: \emph{Is it possible to make the hyperplane clearer by reducing some support samples?}

We try more data, various regularization methods (L2 and dropout). Figure \ref{fig:datasize} first shows that the trends of loss and support proportion are largely consistent. And more data can help reducing support samples to make the hyperplane clearer and the predictor better, while L2 and embedding dropout fail.

\begin{figure*}[ht]
  \includegraphics[width=0.33\linewidth]{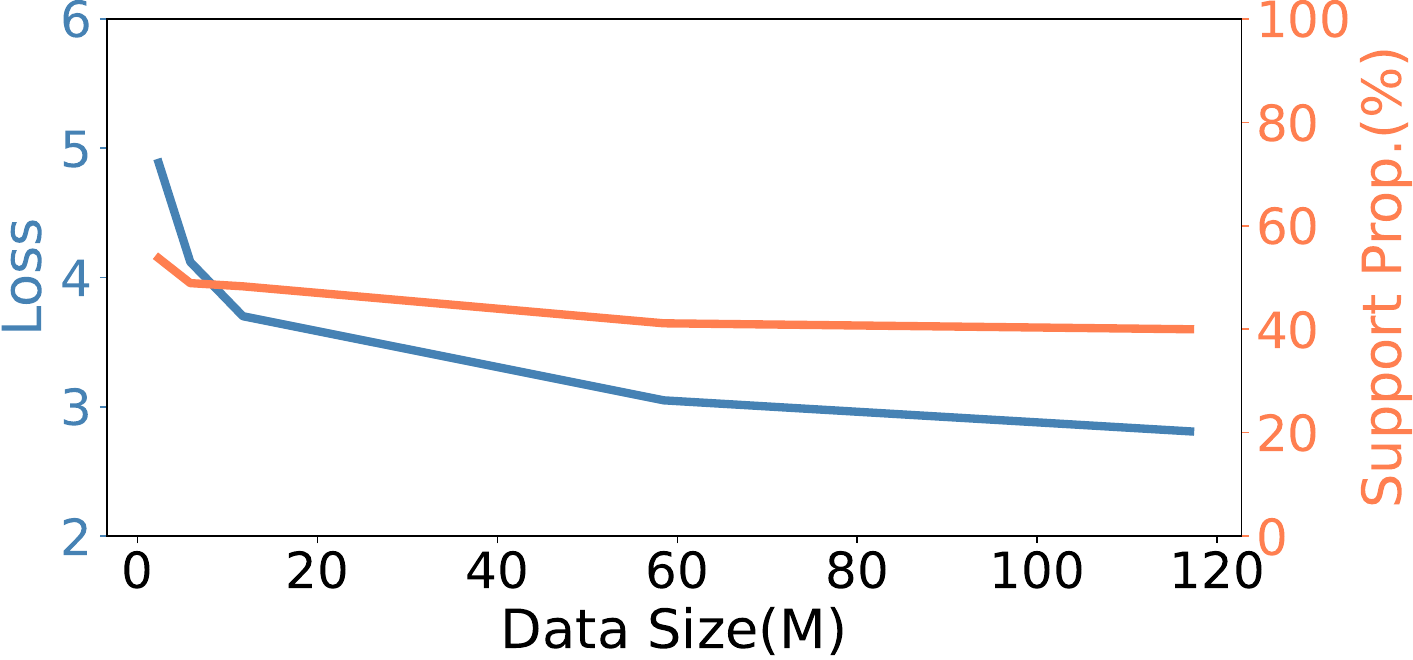} \hfill
  \includegraphics[width=0.33\linewidth]{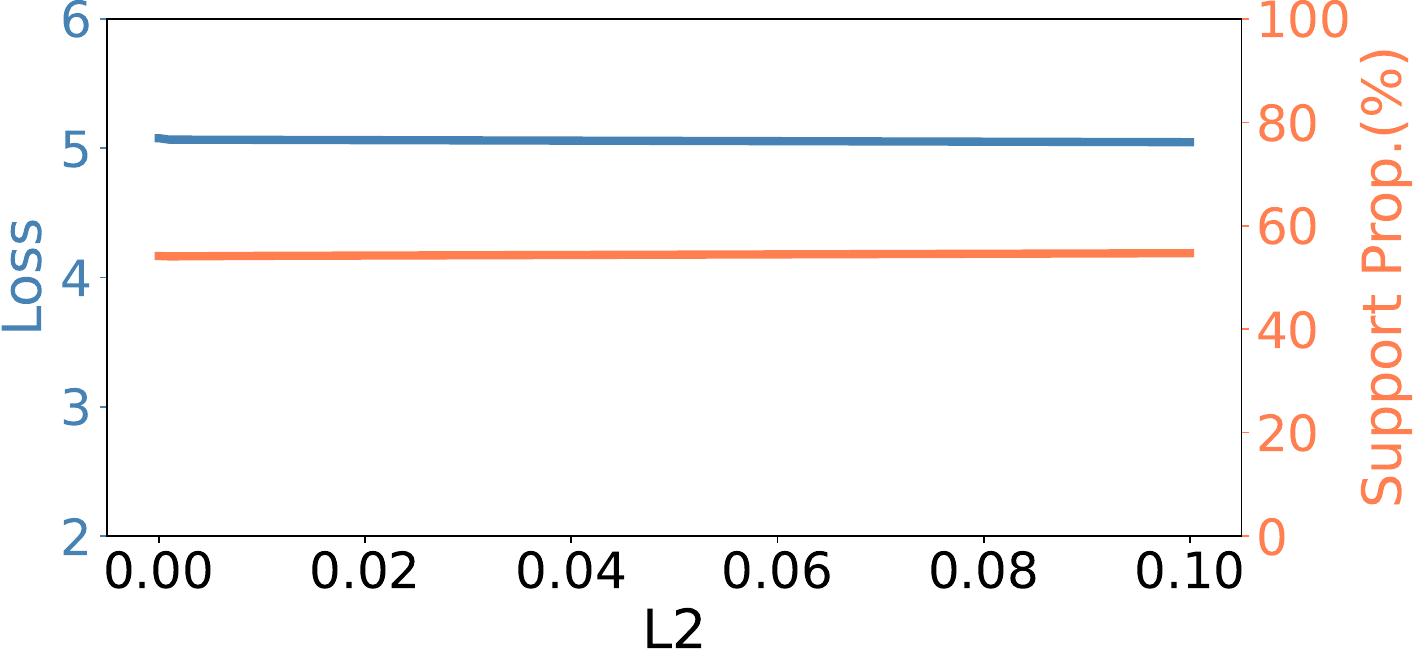}\hfill
  \includegraphics[width=0.33\linewidth]{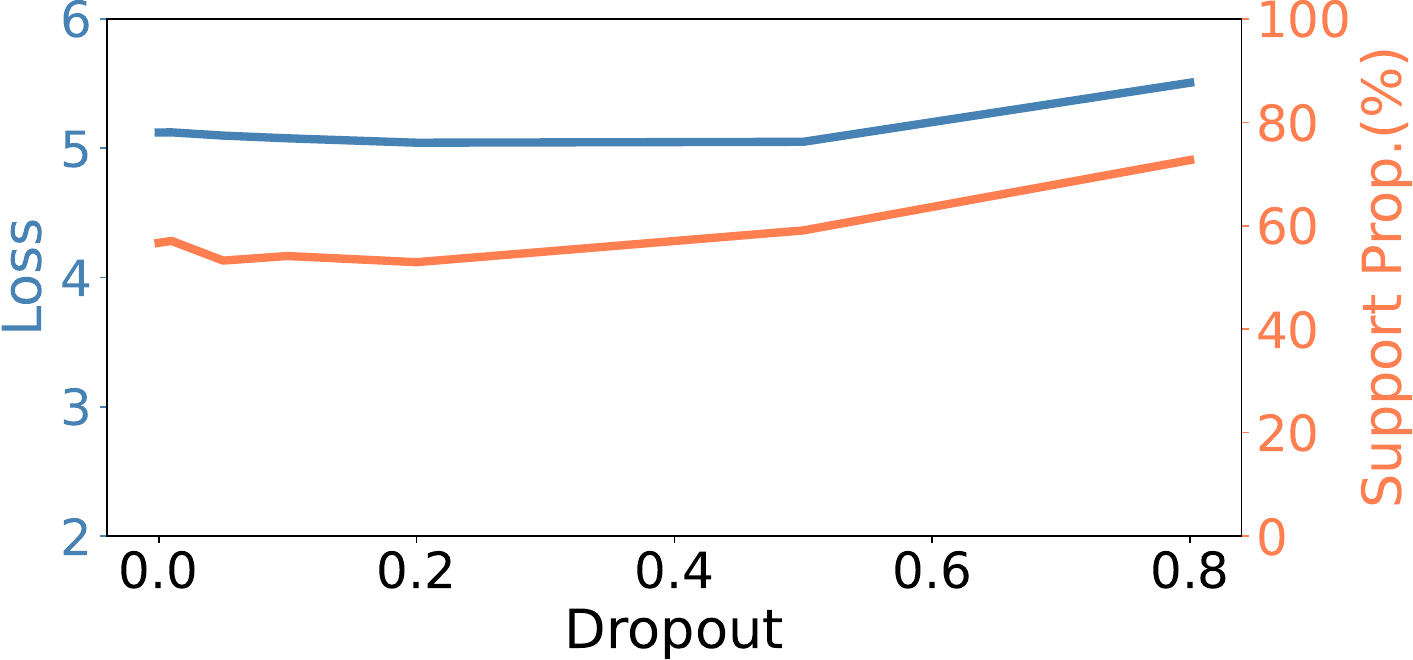}
  \caption {The change of Loss and Support proportion on different training sets. \textbf{Data Size}: we sample subsets of different sizes from wikitext-103.\textbf{L2}: we set the weight decay parameter to adjust L2 regularization during training. \textbf{Dropout}: we try different dropout rates at the embedding layer.}
  \label{fig:datasize}
\end{figure*}

In conclusion, we get 1) For tuning heads, support samples are key, but some non-support samples are also necessary. We can remove non-support samples with soft sampling. 2) Non-support samples are important for learning representations. 3) Fewer support samples indicate the clearer hyperplane and better performance of the predictor. More data vitals.
\section{Predicting Support Samples}\label{sec:predicting_support}


We have another discovery when training with subsets (not shown in Table \ref{tab:removing non-support}): the loss on original support samples are almost the same for models trained with and without non-support samples ($6.17$ vs. $6.43$). \footnote{The loss of hard sampling (training only with support samples) on the original samples is $6.43$ and of random sampling is $6.17$. In fact, the original model is $5.30$,
but the comparison may not be fair since the size of the training
sets are different. We thus take the random sampling $6.17$ 
as our baseline.  }

The result implies that non-support samples 
have limited influence on better fitting support samples,
and it roughly suggests that the hardness of support samples 
might be ``intrinsic''.
To make the problem clear, we cast an interesting question,
\emph{can we predict whether a sample is a support sample or not without training the language model?}

Surprisingly, we have a positive answer.
We will show that given initial parameters, 
with simple classifiers (linear and MLP) and 
a limited number of training samples 
(annotations of support and non-support samples), 
one could recognize samples with high accuracy.
It may confirm our ``intrinsic'' property intuition.
The following are details.


Denote $\zeta(\mathbf{x}, \mathbf{y}, \theta^{(t)})$ 
to be a feature map which extracts features
of a sample $(\mathbf{x}, \mathbf{y})$ according to 
model parameter $\theta^{(t)}$ at checkpoint $t$
(especially, we are interested in the initial state $\theta^{(0)}$).
$h_\eta(\cdot)$
is a binary classifier (with parameter $\eta$)
which takes a feature vector
$\mathbf{z}=\zeta(\mathbf{x}, \mathbf{y}, \theta^{(t)})$ 
and outputs $\mathbf{s} \in \{0, 1\}$
indicating whether
the input is a support sample or not.
To learn the classifier, we need a training set $E=\{(\mathbf{z}_j, \mathbf{s}_j)\}_{j=1}^K$.
We experiment with the following configurations.

\paragraph{The feature map $\zeta$}
We consider three types of features at checkpoint $t$,
\begin{itemize}
  \item the last hidden vectors
  $\phi_{\theta^{(t)}}(\mathbf{x}) \in \mathrm{R}^d$,
  which is $\phi(\mathbf{x})$ evaluated with 
  checkpoint $\theta^{(t)}$. 
  Recalling that support samples are defined 
  according to $p(v|\mathbf{x})$ which is
  a generalized linear model of
  $\phi(\mathbf{x})$,
  last hidden vectors seem to be a natural
  and effective choice.
  \item concatenation of all hidden vectors
   $[\phi_{\theta^{(t)}}^1(\mathbf{x});
   \phi_{\theta^{(t)}}^2(\mathbf{x}); 
   \dots; \phi_{\theta^{(t)}}^l(\mathbf{x})]
   \in \mathrm{R}^{dl}$,
   where $\phi_{\theta^{(t)}}^j(\mathbf{x})$
   is the hidden vector of layer $j$.
   The feature includes intermediate representations.
   \item gradient features $\nabla_\theta p(\mathbf{y}|\mathbf{x})|_{\theta^{(t)}}$.
   Instead of directly using the gradients, which 
   are in the extremely high dimension space,
   we perform a random projection of the gradient 
   vector to a low dimension (4096). 
   \footnote{The time complexity of gradient features(gradients are only computed offline once) and the space complexity(gradients are projected to low dimension 4096) are acceptable.}
\end{itemize}

\paragraph{Classifier}
We consider two classifiers,
\begin{itemize}
    \item linear classifiers, 
    $h_\eta(\mathbf{z}) = \eta^T\zeta(\mathbf{x}, \mathbf{y}, \theta^{(t)})$.
    The learnable parameter $\eta$
    equals the dimension of input features.
    \item MLPs,
    $h_\eta(\mathbf{z}) = w^TW_2\sigma(W_1\zeta(\mathbf{x}, \mathbf{y}, \theta^{(t)}))$,
    where $\sigma$ is a non-linear activation function, 
    and $\eta=[w; W_1; W_2]$.
    We try different model capacities by
    varying the size of $W_1, W_2$.
\end{itemize}

\paragraph{Training set}
We collect training set $E$ for this classification task
from the training samples of the language model $D$. 
We first train the language model with $D$,
and get annotations of support/non-support samples according 
to the last checkpoint.
We split the set to training, validation and testing set 
with 8:1:1 ratio.
\footnote{In the experiment, we use wikitext-2 set. The binary classifier has 1.90M training samples, 0.24M validation samples, and 0.24M testing samples.}
We also test smaller versions of the training set.

\paragraph{Checkpoints} 
We extract features from different checpoints of 
the language model.
Two special checkpoints are,
\begin{itemize}
    \item checkpoint $\theta^{(0)}$ which is the initial language model without any training.
    \item the last checkpoint  $\theta^{(T)}=\theta$
    which is the parameter defining the gold standard 
    annotation of support/non-support samples.
    We may think the performances of classifiers 
    at the last checkpoint are performances upperbounds:
    a classifier can see all necessary information
    for recognizing support samples.
\end{itemize}
As $\theta^{(0)}$ could be hard,
we are also interested in the performances at the early
checkpoints of training.

The experimental results (Figure \ref{fig:predict_accuracy}) 
show that with MLP and gradient feature, the prediction 
accuracy can reach 80\% without training, just 5\% shy of 
the upper limit (last checkpoint). The classifier of the 
last checkpoint still cannot achieve 100\% accuracy, 
suggesting that the cut between support and non-support samples is not clear. 
If we allow extracting features from early training phases,
the prediction accuracy can be better.

Performance improves with more data and more parameters (MLP with increasing intermediate dimension) but eventually levels off. The upper limit of gradient feature is around 80\%, of last hidden state is about 65\%, and of all hidden states is 66\%. Gradient outperforms hidden states, particularly in the early training stages. Even with more parameters, hidden states still cannot catch up.

In brief, we successfully predict whether a sample is a support sample without training (80\% accuracy) only using a simple classifier (MLP) with a randomly projected gradient feature.

\begin{figure*}[t]
  \includegraphics[width=0.32\linewidth]{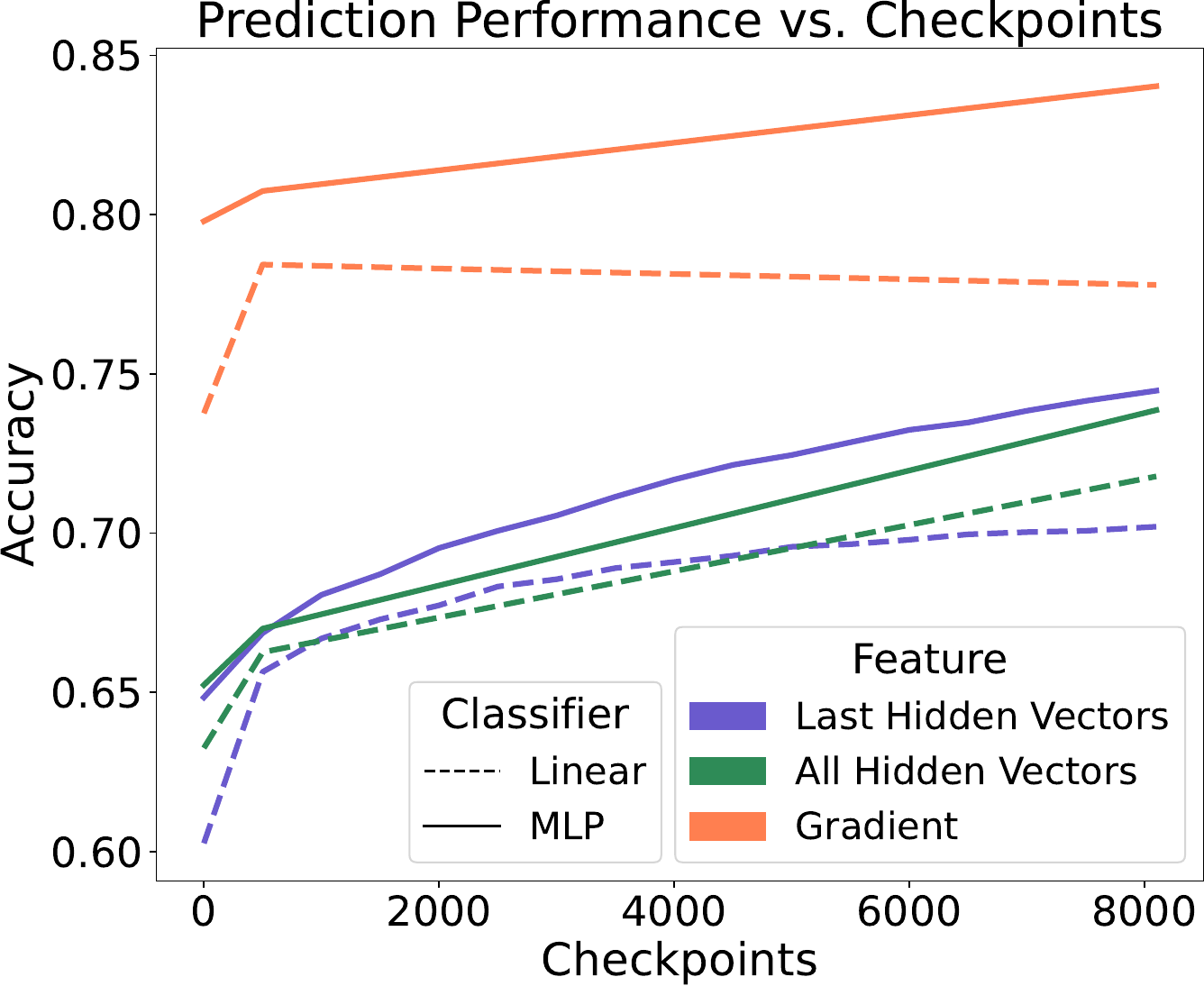} \hfill
  \includegraphics[width=0.32\linewidth]{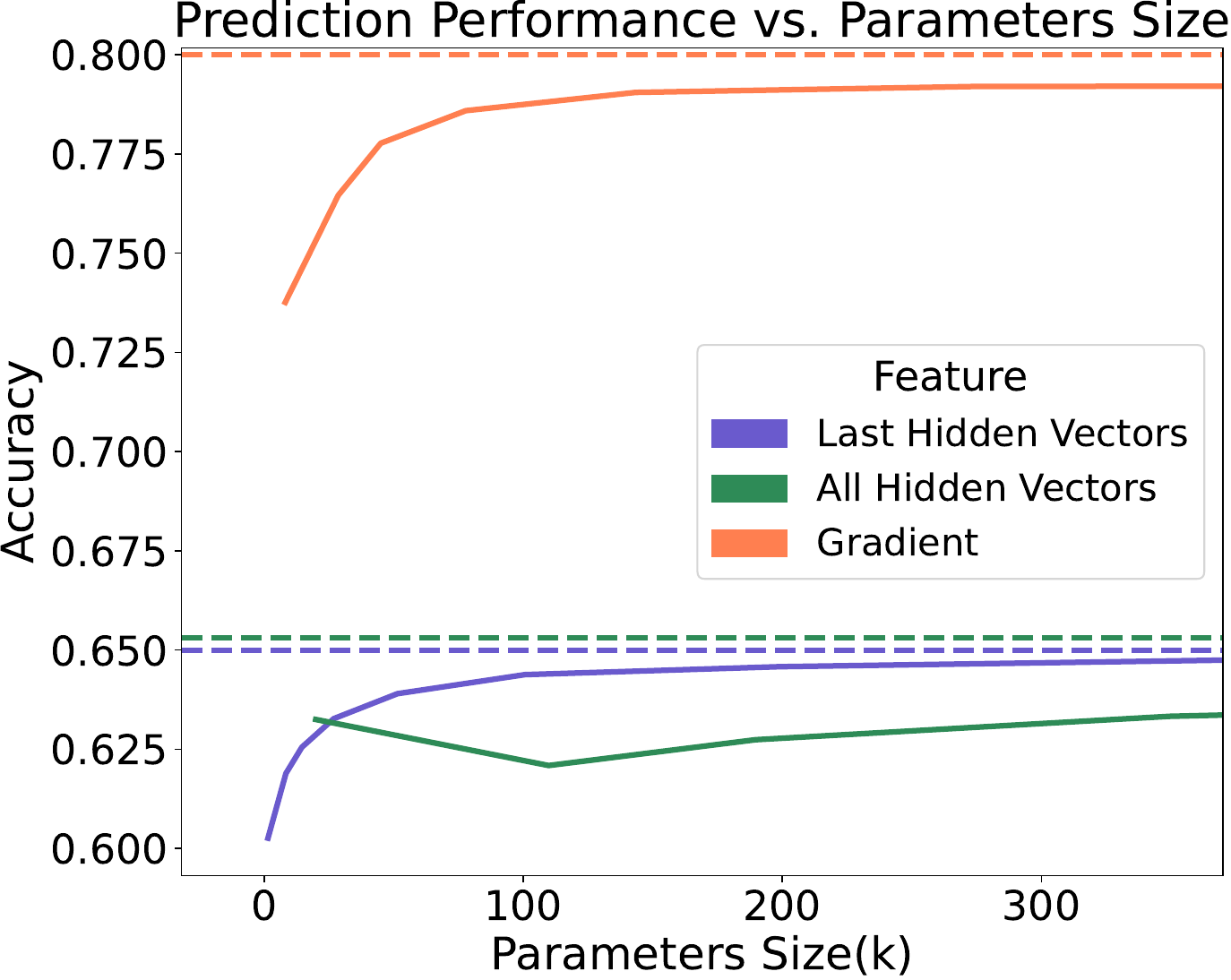} \hfill
  \includegraphics[width=0.32\linewidth]{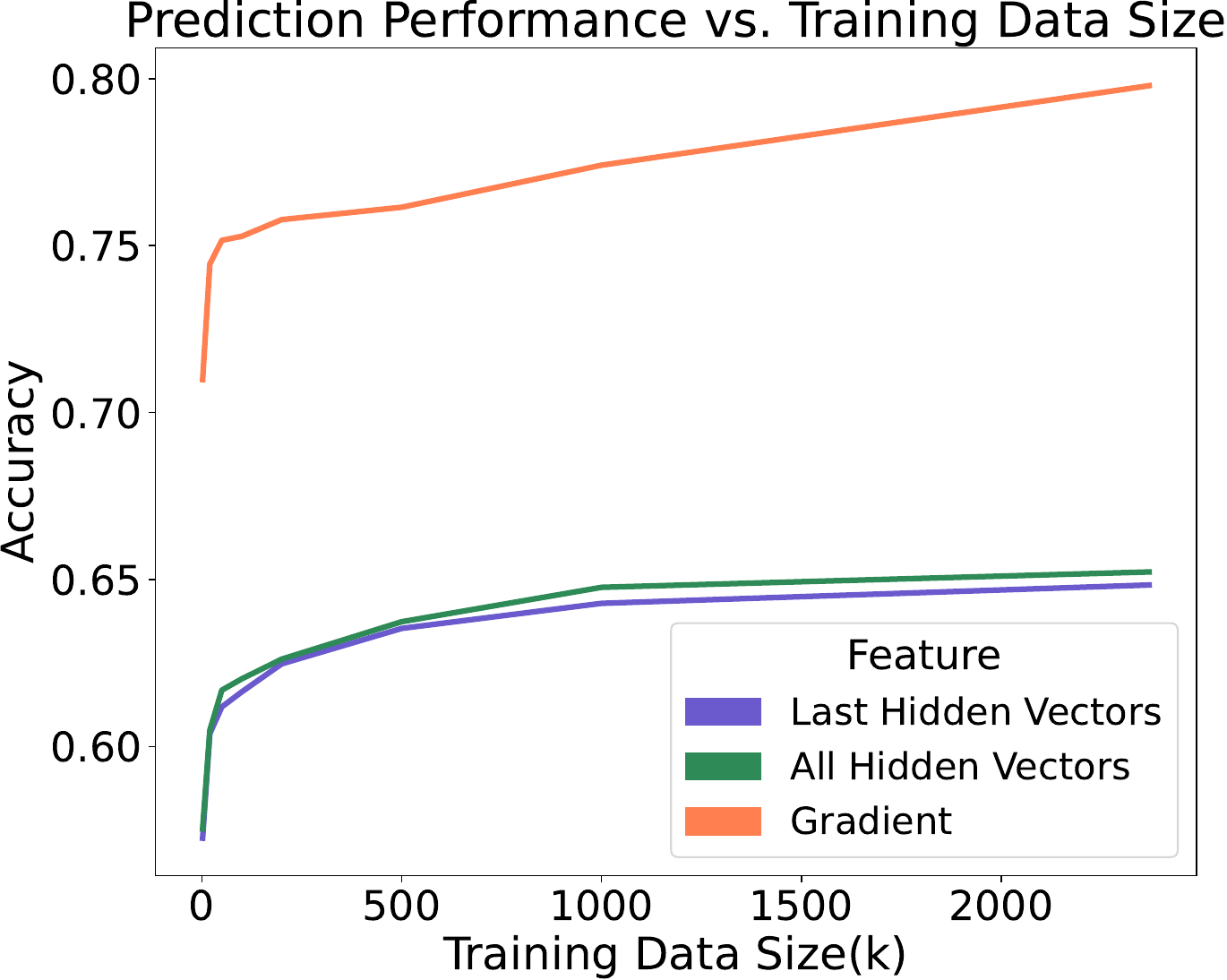}
  \caption {Performance of the classifier under various variables. \textbf{Left}: Performance changes across different training stages. The accuracy achieves nearly 80\% even without training, and only improves to 85\% at the last/best step. \textbf{Middle}: Performance improves with more parameters (MLP with increasing intermediate dimension) but eventually levels off. 
  The upper limit of gradient is around 80\%, of last hidden state is about 65\%, and of all hidden states is 66\%. 
  Due to the varying feature dimensions (last hidden state: 768; all hidden states: 9984; gradient: projected to 4096) and significant differences in parameter counts, only the portion within 350k parameters is shown in the figure for clarity. In practice, all hidden states can achieve 66\% with more parameters. \textbf{Right}: Performance enhances with the growth of training data. All three figures demonstrate that, in terms of features, gradient outperforms hidden states, particularly in the early training stages. Even with more parameters, hidden states still cannot catch up.}
  \label{fig:predict_accuracy}
\end{figure*}

\section{Non-support Samples}\label{sec:non_support}

In Section 2, from the counterfactual (removing non-support samples), we preliminarily infer that non-support samples are crucial for learning model parameters beyond the heads. Now we further show that non-support samples play a role in learning representations. 

In Transformers architectures, higher-layer representations are typically better than lower-layer ones. Therefore, if higher-layer representations “contain” more non-support samples, could this indicate that non-support samples contribute to learning representations? We probe each layer of LM (learning new heads for each hidden states with the original training object of the LM). 

Figure \ref{fig:hiddenprob_nonsupport} describes the number and POS distribution of non-support samples across different layers.\footnote{We additionally present the probing result of memorized samples in Appendix \ref{section:layer-wise}, which as a subset of non-support samples, show similar outcomes to non-support.}
Overall, representations of the LM "contain" more non-support samples at higher layers. In the first six layers, the number of non-support samples remains relatively low and stable, with a sudden surge observed from the fifth to the sixth layer. The growth in higher layers is markedly more pronounced than in lower layers. For POS, the LM shows some recognition of NUM in the first layer. Notably, PUNCT is heavily contained at the sixth layer.

Now we can answer: non-support samples indeed play a role in learning representations. Conversely, they can also be viewed as a visualization of representational capacity: the LM's representational ability undergoes a qualitative leap starting at the sixth (half) layer.


\begin{figure}[H]
  \includegraphics[width=\columnwidth]{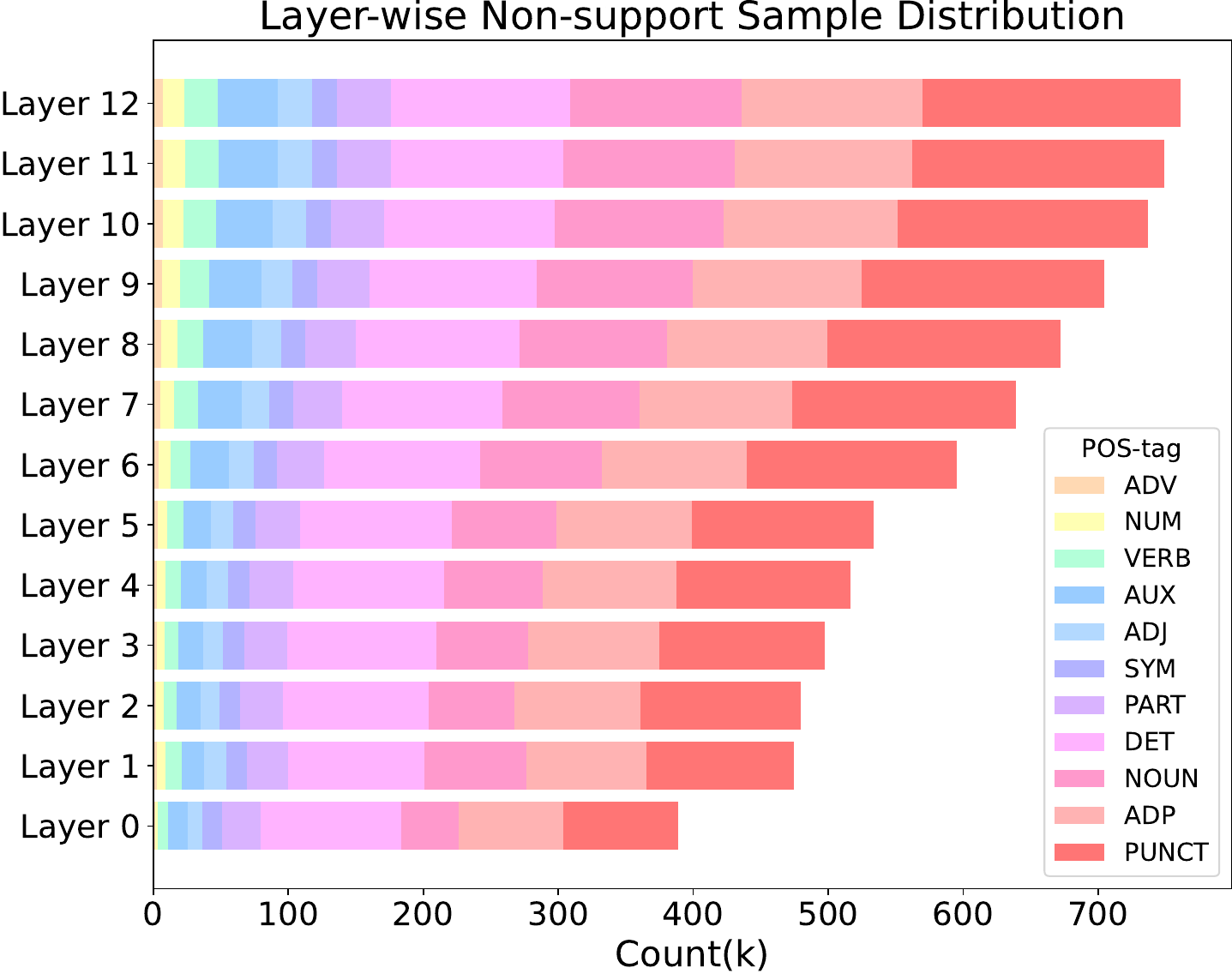}
  \caption{The number and POS distribution of non-support samples across different layers of the LM. Overall, representations of the LM "contain" more non-support samples at higher layers. In the first six layers, the number of non-support samples remains relatively low and stable, with a sudden surge observed from the fifth to the sixth layer. The growth in higher layers is markedly more pronounced than in lower layers. For POS, the LM shows some recognition of NUM in the first layer. Notably, PUNCT is heavily contained at the sixth layer.}
  \label{fig:hiddenprob_nonsupport}
\end{figure}
\section{Related Work}

\paragraph{Difficult instances and model training}

Many studies have investigated the relationship between 
the difficulty of training instances and their role for model's learning, 
and observe that keeping harder instances and a small amount of easy instances 
can maintain or even improve model's generalization performance. 
A wide range of difficulty measures have been adopted, 
such as ``forgetting" behavior, 
i.e., after which this training example is always correctly predicted~\citep{toneva2018an}, 
training loss magnitude~\citep{pmlr-v139-jiang21k,swayamdipta-etal-2020-dataset}, 
gradient norm~\citep{paul2021deep}, 
the number of layers needed for prediction~\citep{baldock2021deep}, 
perplexity~\citep{kwok2024dataset}, 
and hidden representations' distance to other instances~\citep{sorscher2022beyond}. 
Different from these works that aim to identify small but informative subsets for training, 
we study the specific roles of support and non-support examples 
in learning representations and decision boundaries.

\paragraph{Predictive Data Attribution}

Predictive data attribution seeks to answer the question: 
what would have happened if we had trained on a different dataset?~\citep{madry2024dataattribution}. 
Most existing approaches rely on the gradients of individual instances.\footnote{
A notable exception is the work of \citet{ilyas2022datamodels}, 
which uses a linear model of instances to predict test accuracy.}
One line of research uses influence functions~\citep{pmlr-v70-koh17a} 
and its approximations~\citep{NEURIPS2019_a78482ce,park2023trak,grosse2023studyinglargelanguagemodel}. 
However, other studies have observed them to be fragile~\citep{basu2021influence} 
and do not answer the question of leave-one-out retraining~\cite{bae2022if}. 
The other line of works approximates the training dynamics, 
by unrolling the updates during each step~\citep{
NEURIPS2019_5f146156,NEURIPS2020_e6385d39,bae2024training}. 

\section{Conclusion}

This work has taken a data-centric approach, 
to study the training samples that contribute the most to the prediction of each token 
using a representation theorem. 
Concretely, we make three contributions,

\begin{itemize}
    \item We observe that support samples constitute over half of the training set.
    Interestingly, some predictions are supported by a large number of samples,
    while others are supported by only a few (\S\ref{sec:first_glance}).
    suggesting language models may employ two distinct modes of prediction. 
    Furthermore, we identify two types of support samples:
    those that help predict the correct tokens and those that prevent incorrect token predictions.

    \item We further study the roles of support samples in training 
    using removing-retraining experiments. 
    We find that support and non-support samples are respectively more useful 
    for learning prediction heads and backbone representations
    (\S\ref{sec:importance_of_support} and \S\ref{sec:non_support}). 
    Interestingly, 
    we observe that a few non-support samples are always needed for successful training. 

    \item Inspired by the observation that 
    non-support samples do not contribute to learning support samples, 
    we investigate whether the difficulty of support samples is intrinsic 
    by predicting them without training language models.
    Surprisingly, we observe that 
    using only the gradients of a randomly initialized language model 
    can predict support samples with over 80\% accuracy, 
    validating our hypothesis (\S\ref{sec:predicting_support}). 

\end{itemize}
\section{Limitation}

\paragraph{On the ambiguity between support and non-support samples.}
In section~\ref{sec:non_support},
we have noted that even when using features from the last checkpoint for training, 
the classifier’s accuracy on the test set is still not perfect. 
This suggests that 
the boundary between support and non-support samples is not well-defined. 
Directly dividing training samples into support and non-support 
based on a threshold may have inherent issues. 

\paragraph{Beyond argmax: rethinking decoding and sample importance.} 
The argmax decoding enters the representation theorem through the MLE loss function(Theorem \ref{thm:rep}), which is common in LM training. If the language model is trained in a different object function, both the representation of next-word prediction heads and the definition of important samples could be different.

However, for inference, the analysis is not limited to the argmax decoding. For example, when estimating importance scores, one could use sampling instead of argmax prediction. The sampling-based decoding may better reflect the nature of language data, which often does not align with a deterministic majority-vote interpretation. This may help address the concern that the observed next token $y$ should not be assumed to be the argmax outcome of the underlying distribution.

In terms of the sparsity of next-word distribution and the role of argmax decoding. We would think that the learning of language model tries to establish an approximate mapping between context-dependent semantics of a token $v$ (i.e., presentations of various prefixes ended with $v$) and context-independent semantics of the token (i.e., presentations of $v$ in vocabulary). We would also think that the next-word prediction task is more like a nearest-neighbor search in the learned vocabulary space.

\section{Acknowledgement}
The authors wish to thank all reviewers for their helpful comments and suggestions. The corresponding authors are Yuanbin Wu and Yufang Liu.

\bibliography{main}

\appendix

\section{Proof of Theorem \ref{thm:rep}}
\label{section:proof}

\begin{proof}
We first compute derivatives of $\log p(\mathbf{y}|\mathbf{x})$,
\begin{equation*}
   \diff{\log p(\mathbf{y}|\mathbf{x})}{\theta_v} = 
   \diff{\left(\theta_\mathbf{y}^T\phi(\mathbf{x}) - \log Z \right)}{\theta_v}.
\end{equation*}
For the normalizer
$Z=\sum_{v'}\exp{\left(\theta_{v'}^T\phi(\mathbf{x})\right)}$,
we have,
\begin{equation*}
    \diff{\log Z}{\theta_v} = \frac{1}{Z}\exp{\left(\theta_v^T\phi(\mathbf{x})\right)}\cdot\phi(\mathbf{x})
    = p(v|\mathbf{x})\cdot\phi(\mathbf{x}).
\end{equation*}
Therefore, 
\begin{align*}
   \diff{\log p(\mathbf{y}|\mathbf{x})}{\theta_v} &= 
   \left\{\begin{array}{cc}
    \left(1-p(\mathbf{y}|\mathbf{x})\right)\phi(\mathbf{x}), 
    & \mathbf{y}=v \\
    -p(\mathbf{y}|\mathbf{x})\phi(\mathbf{x}),
    & \mathbf{y}\neq v 
   \end{array}
   \right. \\
   &= \left(\mathbbm{1}(\mathbf{y}=v)-p(\mathbf{y}|\mathbf{x})\right)\phi(\mathbf{x}).
\end{align*}

Next, since $\theta$ is a stationary point, 
$\diffp[]{L(\theta,D)}{\theta_v}=0$, we have
\begin{align*}
   -\frac{1}{N}
   \sum_{i=1}^N 
   (\mathbbm{1}(\mathbf{y}_i=v)-p(v|\mathbf{x}_i))\phi(\mathbf{x}_i)
    + 2\lambda\theta_v = 0 \\
    \Rightarrow
   \theta_v = \frac{1}{2N\lambda}
    \sum_{i=1}^N
   (\mathbbm{1}(\mathbf{y}_i=v)-p(v|\mathbf{x}_i))\phi(\mathbf{x}_i)
\end{align*}
\end{proof}

\section{Rationale Behind the Choice of Hyperparameter $\tau$}
\label{section:tau}
We set $\tau=0.9$ empirically, based on the following rationale:
\paragraph{Distribution of sample importance scores $\alpha$.} 
As shown in Figure \ref{fig:alpha_dis}, we observe a drastic increase of $\alpha$ from $9.27\%$ to $54.16\%$ from $\tau=0.8$ to $\tau=0.9$, suggesting a natural threshold for different regions.
\paragraph{Ablation studies for threshold parameter $\tau$ sensitivity.} 
We report the number of support samples with different $\tau$ as illustrated in Figure \ref{fig:thre_sen}.
The sharp decrease of the number of support samples from $0.9$ to $1.0$ further justifies our choice.

\paragraph{Softmax sharpen probabilities.}
Since probabilities are processed through softmax, they tend to be quite sharp. Actually, using a temperature parameter can make the model less sensitive to threshold choices, resulting in a smoother distribution.

\begin{figure}[t]
  \includegraphics[width=\linewidth]{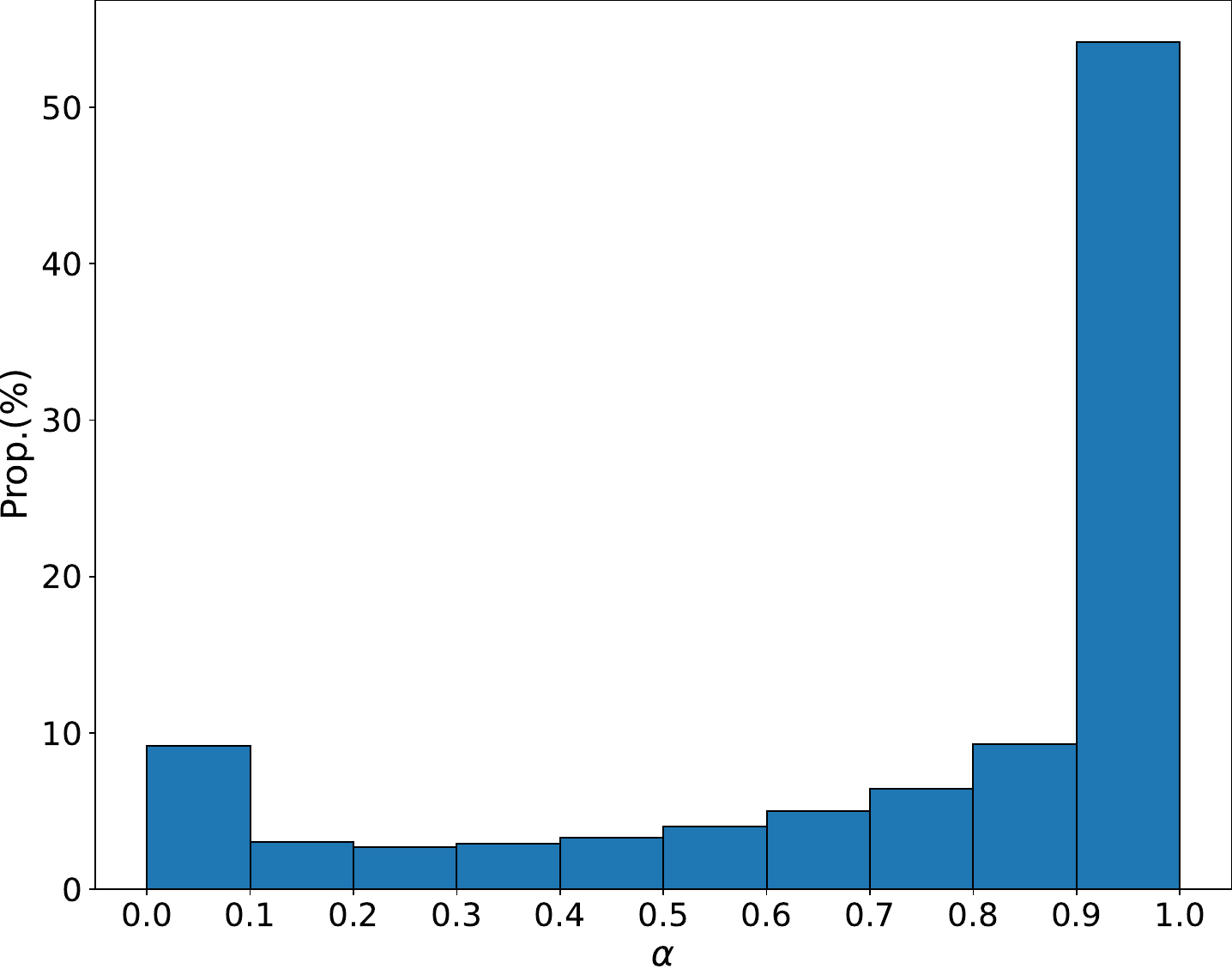}
  \caption {Distribution of sample importance scores $\alpha$.}
  \label{fig:alpha_dis}
\end{figure}

\begin{figure}[t]
  \includegraphics[width=\linewidth]{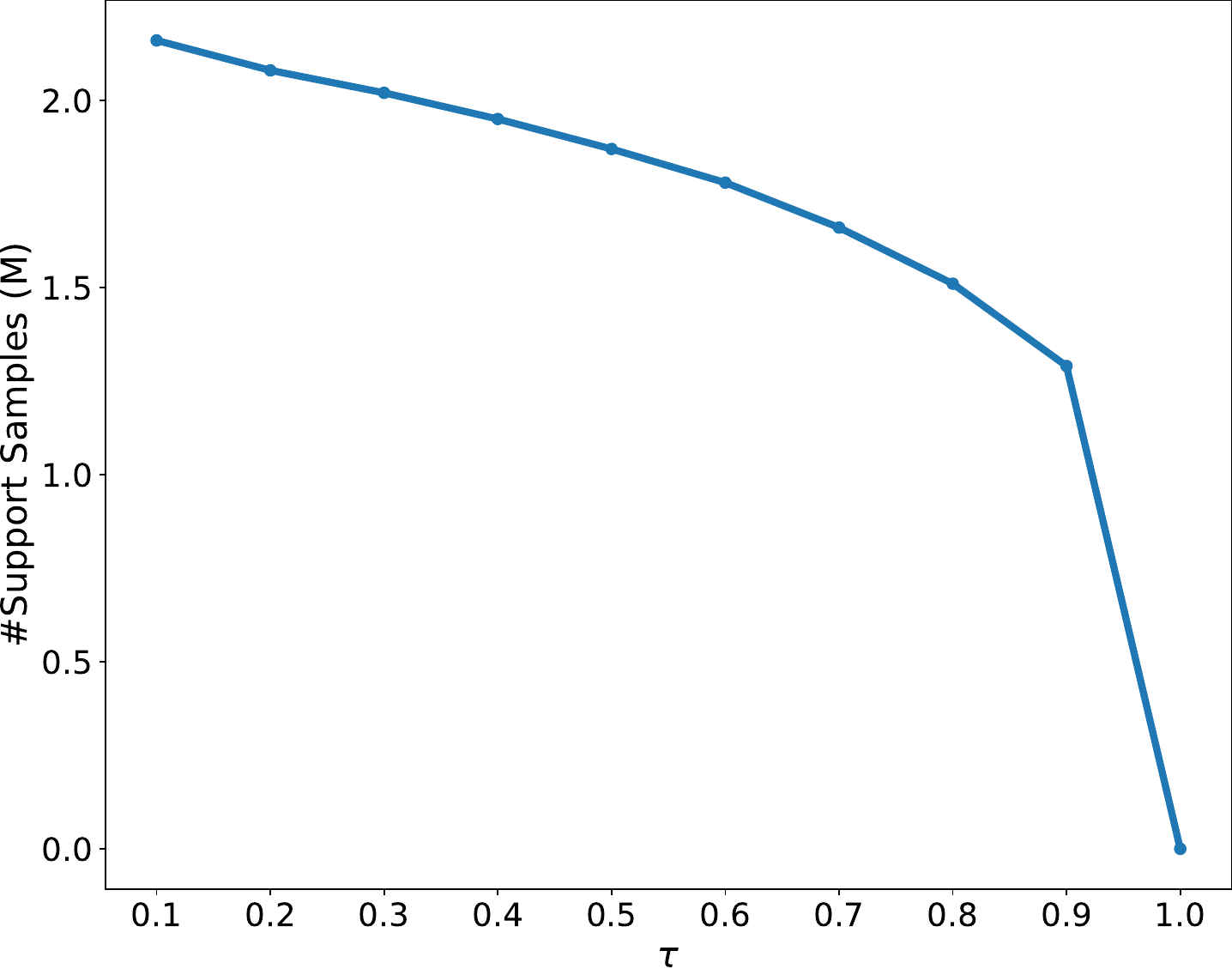}
  \caption {The number of support samples with different $\tau$.}
  \label{fig:thre_sen}
\end{figure}

Actually, the choice of threshold does not affect the analysis framework in the work: all methods can be generalized to any threshold.

On the other hand, threshold selection is practically significant. From a practitioner’s viewpoint, with a fixed budget, determining an appropriately sized support set and exploring how to adjust thresholds for different support set sizes are valuable directions for future work.

\section{Concrete Example}
\label{section:example}
We present some support and non-support samples of \textit{etts}, where the target token is \textit{etts} itself, in Table \ref{tab:example}.
In our framework, \textit{etts} is typically a suffix observed in proper nouns such as \textit{Burnetts}, \textit{Corbetts}, and \textit{Plunketts}.
Our statistics reveal that \textit{Burnetts} and \textit{Corbetts}(of support samples) appear once each in the training data, while \textit{Plunketts}(of non-support samples) appears 68 times.
This example suggests that less frequent patterns in the training data are more likely to become support to the LM heads' parameters.

\begin{table}[h]
\centering
\small
\begin{tabular}{@{}p{0.2\linewidth}p{\dimexpr 0.8\linewidth-2\tabcolsep}@{}}
\toprule
\textbf{Type} & \textbf{Sample} \\
\midrule
\multirow{2}{*}{Support}
& \parbox[t]{\linewidth}{\raggedright\textit{... The northern edge of the Plunketts Creek drainage basin is formed by \underline{Burnetts}}} \\
\cmidrule(l){2-2}
& \parbox[t]{\linewidth}{\raggedright\textit{... They are mainly composed of granite that has weathered into more rounded hills with many long scree slopes on their flanks. The highest point of these hills is Glamaig, one of only two \underline{Corbetts}}} \\
\midrule
Non-support 
& \parbox[t]{\linewidth}{\raggedright\textit{... Much of the Plunketts Creek valley is composed of various glacial deposits, chiefly alluvium. Although the \underline{Plunketts}}} \\
\bottomrule
\end{tabular}
\caption{Partial support and non-support samples of the token \textit{etts}.}
\label{tab:example}
\end{table}

\section{Layer-wise Non-support/Memorized Samples Distribution}
\label{section:layer-wise}
Figure \ref{fig:memorized} presents the probing result of memorized samples, which reveals a trend consistent with non-support samples as shown in Figure \ref{fig:hiddenprob_nonsupport}.

\begin{figure}[t]
  \includegraphics[width=\linewidth]{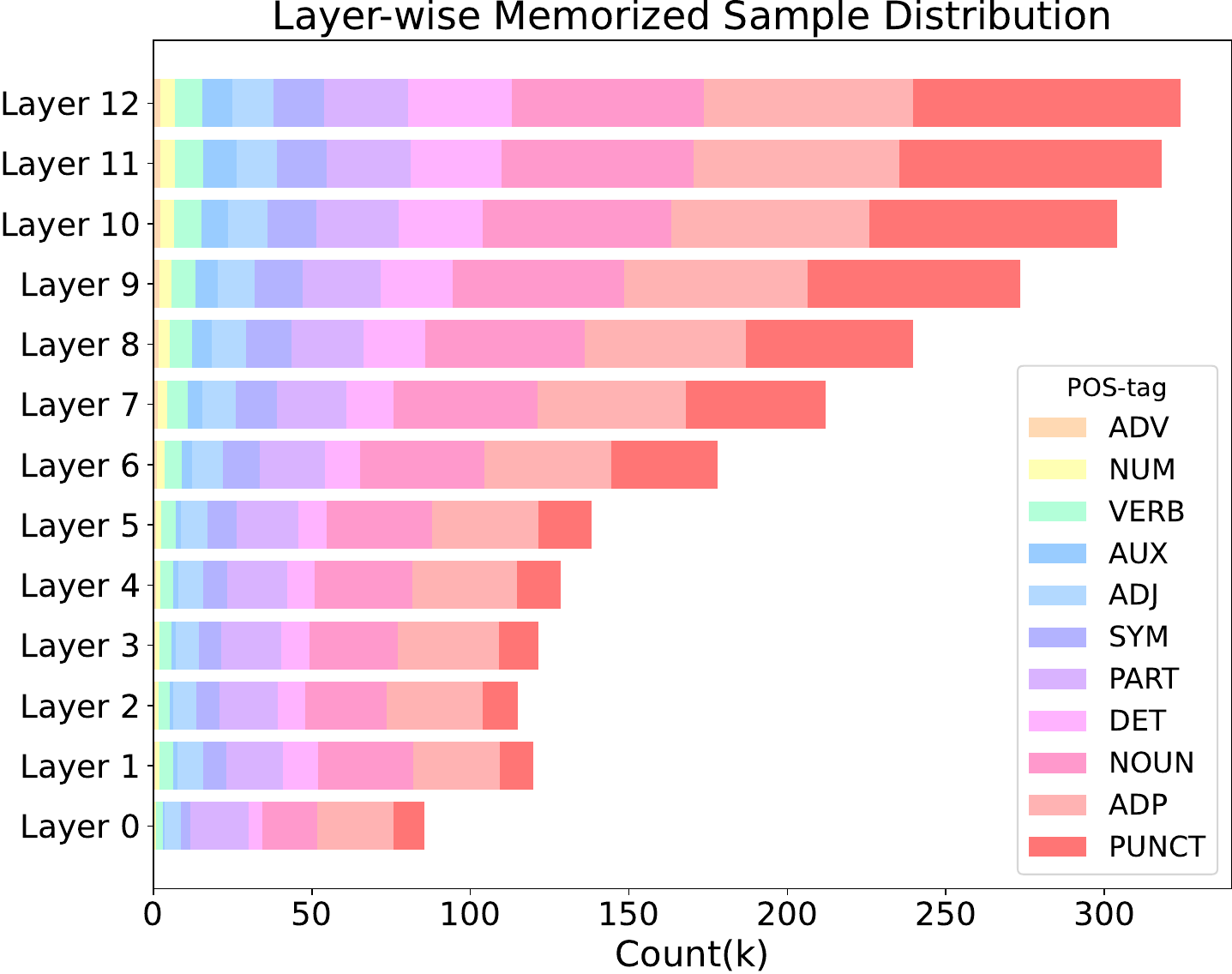}
  \caption {The number and POS distribution of memorized samples across different layers.}
  \label{fig:memorized}
\end{figure}

\section{Scaling Up}
\label{section:scaleup}

\begin{table*}[t]
  \centering
  \begin{subtable}[t]{\linewidth}
  \small
    \centering
    \begin{tabular}{lccc}
      \toprule
      \multirow{2}{*}{\textbf{Method}} & \multirow{2}{*}{\textbf{ Test Loss}} & \multicolumn{2}{c}{\textbf{\# Support Samples(M)}} \\
      \cmidrule(lr){3-4}
      & & new training set (after removal) & original training set \\
      \midrule
      \textbf{-}    & 5.02    & 1.13 &    1.13   \\
      hard    & 5.62      & 1.13$\to$0.98&  1.13$\to$1.77  \\ 
      soft    & 5.07   &  0.73$\to$0.70&   1.13$\to$1.25   \\
      random    & 5.12   & 0.57$\to$0.53 &   1.13$\to$1.23 \\\bottomrule
    \end{tabular}
    \caption{only training LM heads}
    \label{tab:first}
  \end{subtable}
  \hspace{0.05\linewidth}
  \begin{subtable}[t]{\linewidth}
    \centering
    \small
    \begin{tabular}{lccc}
      \toprule
      \multirow{2}{*}{\textbf{Method}} & \multirow{2}{*}{\textbf{Test Loss}} & \multicolumn{2}{c}{\textbf{\# Support Samples(M)}} \\
      \cmidrule(lr){3-4}
      & & new training set (after removal) & original training set  \\
      \midrule
      \textbf{-}      & 5.02   & 1.13 &   1.13    \\
      hard      &6.67     &  1.13$\to$1.12 &    1.13$\to$2.36 \\
      soft      &5.77   &  0.87$\to$0.70&   1.13$\to$1.78    \\
      random      &5.51  & 0.65$\to$0.53&  1.13$\to$1.51  \\\bottomrule
    \end{tabular}
    \caption{full model training}
    \label{tab:second}
  \end{subtable}
  \caption{Performance of the predictor and Change of support samples of the $774$M model
  when removing non-support samples from original training set with various sampling methods.}
  \label{tab:removing non-support scaleup}
\end{table*}
\begin{figure}[t]
  \includegraphics[width=\linewidth]{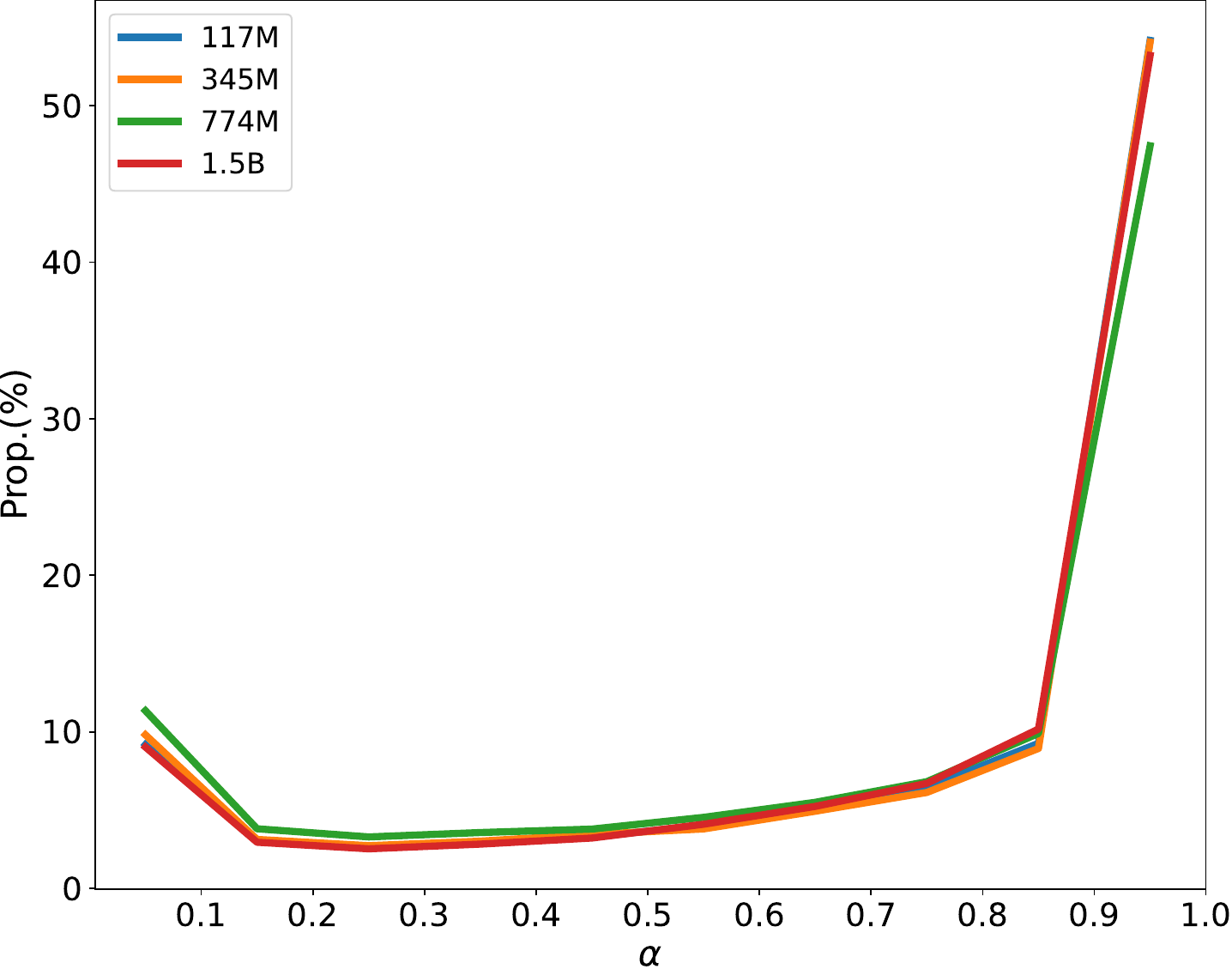}
  \caption {Distribution of samples importance scores $\alpha$ across models of various sizes trained on the wikitext-2 dataset.}
  \label{fig:alpha_dis_scaleup}
\end{figure}
\begin{figure}[t]
  \includegraphics[width=\linewidth]{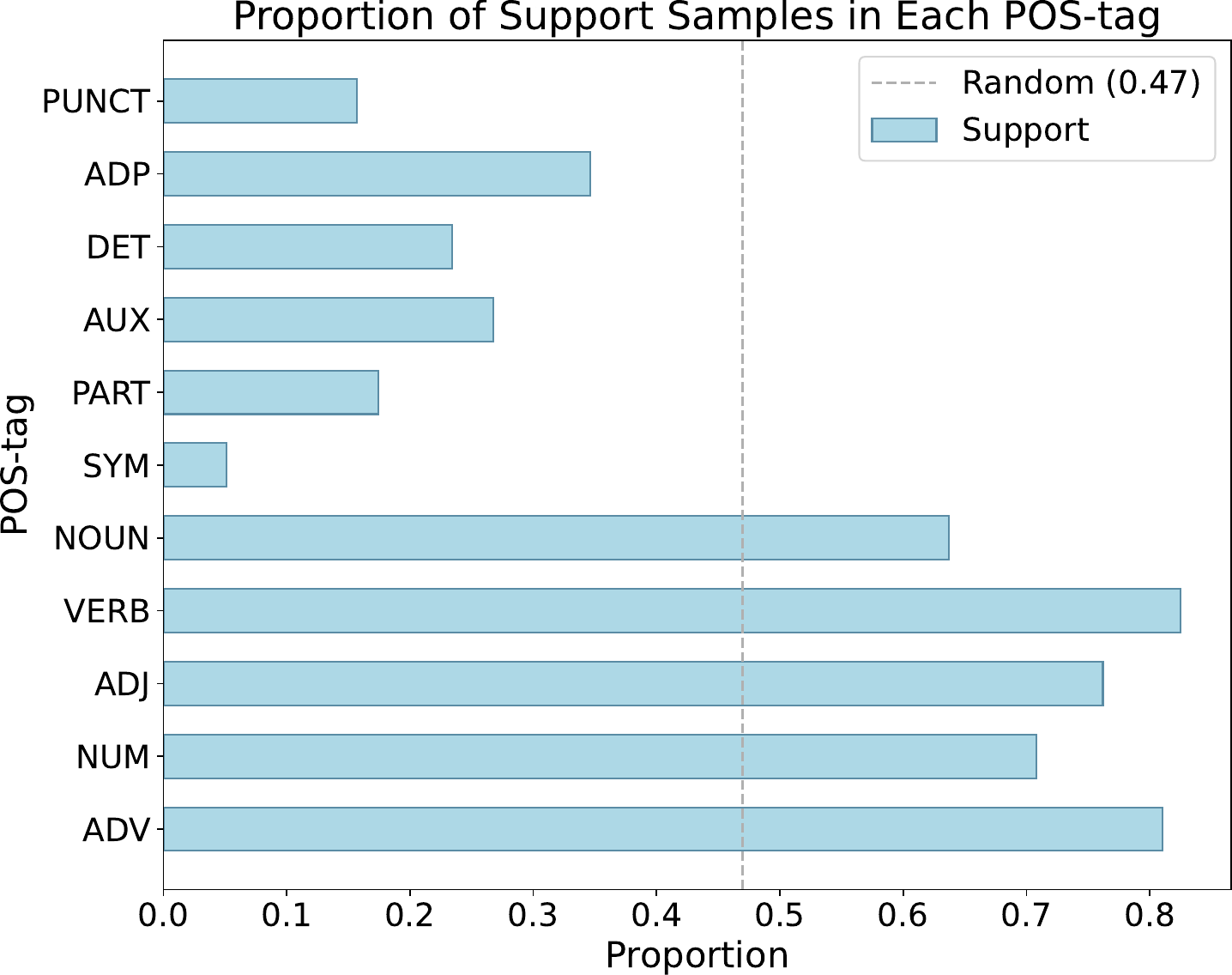}
  \caption {Proportion of support samples in different POS tags of $774$M model.}
  \label{fig:alpha_dis_scaleup}
\end{figure}
We scale our experiments to $345$M(GPT-2-Medium architecture), $774$M(GPT-2-Large architecture), and $1.5$B(GPT-2-XL architecture) models. 
Our findings remain consistent across these scales as follows.
\paragraph{Distribution of sample importance scores $\alpha$.} 
We compare the 
distribution across models of various sizes trained on the wikitext-2 dataset and find them similar, highlighting the scalability of our method.

\paragraph{Number of support samples.} 
Table \ref{tab:number_scaleup} lists the number of support samples for models of various sizes trained on the wikitext-2 dataset, showing consistency across scales.

\begin{table}[h]
\centering
\small
\begin{tabular}{lcc}
\toprule
\textbf{Model Size} & \textbf{Eval Loss} & \textbf{\#Support Samples(Prop.)} \\
\midrule
$117$M  & $5.05$ & $1.29$M ($54\%$) \\
$345$M  & $5.01$ & $1.28$M ($54\%$) \\
$774$M  & $5.02$ & $1.13$M ($47\%$) \\
$1.5$B  & $5.13$ & $1.26$M ($53\%$) \\
\bottomrule
\end{tabular}
\caption{Evaluation loss and number of support samples for models of various sizes trained on the wikitext-2 dataset.}
\label{tab:number_scaleup}
\end{table}

We further replicate most of results using $774$M model with the wikitext-2 dataset. Following are some key observations.

\paragraph{POS tags of support samples.} 
Figure \ref{fig:alpha_dis_scaleup} shows that for $774$M model, $82\%$ of verbs are support samples while the proportion is only $16\%$ for punctuations. This feature is consistent with the $117$M model.

\paragraph{Removing non-support samples.} 
We conduct the experiment of removing non-support samples on the $774$M model. As shown in Table \ref{tab:removing non-support scaleup}, the conclusions(\S\ref{sec:importance_of_support}) still hold at this larger scale.
For only training LM heads, we can remove non-support samples(around $50\%$ of the samples) while keeping the same performances, with weighted sampling. Following the same reasoning(\S\ref{sec:importance_of_support}), we get for tuning heads, support samples are key, but some non-support samples are also necessary. 
For full model training, we can’t simply remove non-support samples. Soft sampling outperforms random sampling when training LM heads, but the opposite is true for full model training, suggesting non-support samples are crucial for learning representations beyond the heads.

\end{document}